\documentclass{article}

\usepackage{microtype}
\usepackage{graphicx}
\usepackage{subcaption}
\usepackage{makecell}
\usepackage{multirow}
\usepackage{multirow}

\usepackage{booktabs} % for professional tables

\usepackage{hyperref}

\usepackage[accepted]{icml2026}

\usepackage{amsmath}
\usepackage{amssymb}
\usepackage{mathtools}
\usepackage{amsthm}

\usepackage[capitalize,noabbrev]{cleveref}

\theoremstyle{plain}
\newtheorem{theorem}{Theorem}[section]

\theoremstyle{definition}

\theoremstyle{remark}

\usepackage[textsize=tiny]{todonotes}

\icmltitlerunning{Geometry-Aware Contrastive Learning for Few-Shot Automatic Modulation Recognition}

\begin{document}

\twocolumn[
  \icmltitle{Geometry-Aware Contrastive Learning for Few-Shot Automatic Modulation Recognition}

  % It is OKAY to include author information, even for blind submissions: the
  % style file will automatically remove it for you unless you've provided
  % the [accepted] option to the icml2026 package.

  % List of affiliations: The first argument should be a (short) identifier you
  % will use later to specify author affiliations Academic affiliations
  % should list Department, University, City, Region, Country Industry
  % affiliations should list Company, City, Region, Country

  % You can specify symbols, otherwise they are numbered in order. Ideally, you
  % should not use this facility. Affiliations will be numbered in order of
  % appearance and this is the preferred way.
  \icmlsetsymbol{equal}{*}

  \begin{icmlauthorlist}
    \icmlauthor{Guanqun Zhao}{yyy}
    \icmlauthor{Yitong Liu}{yyy}
    \icmlauthor{Jiaxuan Fang}{yyy}
    \icmlauthor{Yufei Mao}{yyy}
    \icmlauthor{Hongwen Yang}{yyy}
    % \icmlauthor{Firstname6 Lastname6}{yyy}
    % \icmlauthor{Firstname7 Lastname7}{yyy}
    % %\icmlauthor{}{sch}
    % \icmlauthor{Firstname8 Lastname8}{sch}
    % \icmlauthor{Firstname8 Lastname8}{yyy,comp}
    %\icmlauthor{}{sch}
    %\icmlauthor{}{sch}
  \end{icmlauthorlist}

  \icmlaffiliation{yyy}{Beijing University of Posts and Telecommunications, Beijing, China}
  % \icmlaffiliation{comp}{Company Name, Location, Country}
  % \icmlaffiliation{sch}{School of ZZZ, Institute of WWW, Location, Country}

  \icmlcorrespondingauthor{Yitong Liu}{liuyitong@bupt.edu.cn}
  % \icmlcorrespondingauthor{Firstname2 Lastname2}{first2.last2@www.uk}

  % You may provide any keywords that you find helpful for describing your
  % paper; these are used to populate the "keywords" metadata in the PDF but
  % will not be shown in the document
  \icmlkeywords{Machine Learning, ICML}

  \vskip 0.3in
]

% this must go after the closing bracket ] following \twocolumn[ ...

% This command actually creates the footnote in the first column listing the
% affiliations and the copyright notice. The command takes one argument, which
% is text to display at the start of the footnote. The \icmlEqualContribution
% command is standard text for equal contribution. Remove it (just {}) if you
% do not need this facility.

% Use ONE of the following lines. DO NOT remove the command.
% If you have no special notice, KEEP empty braces:
\printAffiliationsAndNotice{}  % no special notice (required even if empty)
% Or, if applicable, use the standard equal contribution text:
% \printAffiliationsAndNotice{\icmlEqualContribution}

\begin{abstract}
Standard Self-Supervised Learning (SSL) for Automatic Modulation Recognition (AMR) struggles with ineffective isotropic augmentations, spectral instability, and semantic drift. To address these challenges, we propose Dynamic-Consistency Contrastive Learning (DyCo-CL), a geometry-aware framework that couples Virtual Adversarial Augmentation (VAA) with a semantic consistency loss. We provide a theoretical analysis indicating that this strategy acts as an implicit spectral regularizer for the encoder, enabling stable manifold exploration. Complementing this, our Signal-Adaptive Swin Backbone with fixed-window attention improves structural stability by constraining attention locality, while a Hybrid Knowledge Fusion module anchors representations with physical priors. Experiments on RML benchmarks show that DyCo-CL achieves a 6.27\% accuracy gain in 1-shot settings over prior methods.
\end{abstract}

\section{Introduction}
\label{sec:intro}

% 【第一段：背景 - AMR 很重要，但在缺数据时很难搞】
Automatic Modulation Recognition (AMR) serves as the cornerstone of cognitive radio, enabling dynamic spectrum access in emerging 6G networks. While Deep Learning (DL) has superseded traditional expert-based methods in terms of complexity and representational power~\cite{1,2}, its efficacy remains contingent on massive labeled datasets, a luxury often unavailable in non-cooperative environments.

% 【第二段：靶子 - 概括现有主流范式（不展开细节，只概括特征）】
To mitigate data scarcity, Self-Supervised Learning (SSL) has emerged as the dominant paradigm. Researchers have adapted contrastive frameworks to RF signals~\cite{3,4}, typically relying on stochastic data augmentations to learn invariant representations. To further boost robustness, recent works have begun to integrate physical priors or employ lightweight attention mechanisms~\cite{6,7,16}.

% 【第三段：转折 - 这里的 "However" 引出你的核心观点】

However, these methods operate on the flawed assumption that standard augmentations and generic attention remain robust in high-dimensional spaces. By analyzing the signal manifold, we identify three critical geometric limitations that hinder current frameworks:

% 【接下来接你的三大局限性，现在它们每一条都有了攻击对象】

\textbf{The Geometric Dilemma of Augmentation.} High-dimensional concentration of measure~\cite{17} renders isotropic noise ineffective as perturbations tend to be orthogonal to decision boundary gradients. Meanwhile, unconstrained transformations often breach class margins, inducing semantic drift.

\textbf{Spectral Instability of Self-Attention.} Self-attention exhibits unbounded Lipschitz constants~\cite{18}, resulting in sharp decision boundaries that are brittle to anisotropic perturbations. Existing hybrid architectures~\cite{6,7} prioritize efficiency but neglect this instability, leaving models vulnerable to distortion.

\textbf{Inefficacy of Static Fusion.} Prevalent methods rely on shallow concatenation~\cite{16}, treating physical priors as static auxiliary inputs~\cite{24}. This fails to leverage them as immutable semantic anchors, preventing the rectification of semantic drift during aggressive exploration, especially in 1-shot settings.

To address these challenges, we propose DyCo-CL, a semi-supervised framework synergizing optimization, architecture, and physical priors. Our contributions are:

\begin{itemize}

\item \textbf{Dynamic-Consistency Framework:} We propose a geometric optimization strategy that couples Virtual Adversarial Augmentation (VAA) with a semantic consistency constraint. This approach overcomes the \textit{concentration of measure} where isotropic noise fails, and we theoretically characterize it as an implicit spectral regularizer to promote the geometric stability of the encoder.

\item \textbf{Signal-Adaptive Swin Backbone:} To complement the geometric regularization, we design a 1D Swin Transformer featuring a Deep Convolutional Stem and Fixed Window attention. This architecture addresses the spectral instability of standard Transformers, enabling the model to robustly capture transient signal primitives under adversarial perturbations.

\item \textbf{Hierarchical Hybrid Knowledge Fusion:} To prevent semantic drift in data-scarce regimes, we bridge the semantic gap via a physics-aware fusion module. By dynamically calibrating deep representations with expert physical priors , we ensure that the manifold exploration remains physically grounded, essential for extreme 1-shot recognition.

\end{itemize}

\section{Related Work}
\label{sec:related}

\subsection{Contrastive Learning for Modulation Recognition}
While early AMR methods relied on manual feature extraction~\cite{21,12} or likelihood-based inference~\cite{22,13}, these approaches struggle with modeling complexity. Consequently, DL has become the dominant paradigm. For instance, ~\cite{1} proposed an LSTM-based method to effectively extract spatiotemporal features for modulation classification, while ~\cite{2} utilized ResNet to improve recognition accuracy in wireless communication systems. To further enhance model robustness, ~\cite{20} introduced fuzzy regularization into the classification framework. SSL has recently emerged to address data scarcity, with frameworks like MoCo~\cite{10} and SimCLR~\cite{23} adapted to RF signals~\cite{3,4}. 

To further boost performance, recent studies have begun to explore advanced augmentation strategies. Notably, SSCL-AMC~\cite{5} introduces gradient-based adversarial augmentation to mine hard samples.  
However, these approaches operate primarily at the data level, lacking explicit spectral constraints to guarantee the geometric stability of the learned representation against perturbations.

\subsection{Backbone Architectures: From CNNs to Transformers}
Convolutional Neural Networks (CNNs) have long been the de facto standard for robust AMR, with architectures explicitly designed to handle multipath fading and channel impairments~\cite{9}. Recently, Transformers have attracted attention for their sequence modeling capabilities~\cite{4,7}. Despite their potential, standard Transformers are known to yield sharp decision boundaries due to the unbounded Lipschitz constant of the self-attention mechanism~\cite{18}, making them brittle to anisotropic noise. Although hybrid architectures~\cite{6,7} improve efficiency, they typically lack structural mechanisms to explicitly bound the spectral norm, leaving the model vulnerable to adversarial stress.

\subsection{Physics-Aware Knowledge Fusion}
Integrating expert knowledge is a proven strategy to enhance robustness. Existing methods typically employ a dual-stream approach~\cite{24,25}, merging physical and deep features~\cite{26} via simple concatenation~\cite{8,16}. While effective, these strategies treat physical priors as static auxiliary inputs~\cite{27}, failing to bridge the \textit{semantic gap} required to dynamically calibrate deep features in extreme few-shot regimes.

\section{Preliminaries}

We formulate the AMR signal model and analyze the geometric landscape, identifying three bottlenecks: concentration of measure, semantic drift, and spectral instability that motivate the geometry-aware design of DyCo-CL.

\subsection{Signal Model}

We formulate AMR as a classification task for complex-valued radio signals. A received signal frame of length $L$ is represented as a real-valued tensor $\mathbf{x} \in \mathbb{R}^{2 \times L}$, consisting of In-Phase ($I$) and Quadrature ($Q$) components.
The observed signal is modeled as:
\begin{equation}
    \mathbf{x} = h(\mathbf{s}) + \mathbf{n},
\end{equation}
where $\mathbf{s}$ is the clean modulated signal , $h(\cdot)$ denotes channel impairments, and $\mathbf{n}$ represents additive noise. 
Our objective is to learn a mapping $f_\theta: \mathbb{R}^{2 \times L} \to \mathcal{Y}$ that predicts the modulation label $y \in \mathcal{Y}$ given the observation $\mathbf{x}$.

\subsection{Problem Formulation: Concentration of Measure and Geometric Dilemma}
\label{sec:problem_formulation}

Let the signal space be $\mathcal{X} \subset \mathbb{R}^D$ with $D = 2L$. From a geometric perspective, the core challenge lies in learning an encoder $f_\theta$ that maintains class separability while ensuring invariance to channel perturbations. Standard contrastive learning minimizes representation divergence over a distribution of transformations $\mathcal{T}$. We analyze the geometric limitations of this paradigm below.

\subsubsection{The Orthogonality of Isotropic Noise}

% 【修改点：明确定义 T_iso】
Consider the set of isotropic augmentations $\mathcal{T}_{iso}$, typically implemented as additive white Gaussian noise: $t(\mathbf{x}) = \mathbf{x} + \mathbf{r}$, where $\mathbf{r} \sim \mathcal{N}(0, \sigma^2 \mathbf{I}_D)$. 
Let $\mathbf{v} = \nabla_{\mathbf{x}}\mathcal{L} / \|\nabla_{\mathbf{x}}\mathcal{L}\|$ be the unit vector representing the \emph{sensitive direction}. 
Here, $\mathcal{L}$ denotes the self-supervised contrastive loss ($\mathcal{L}_{\text{NCE}}$), implying that $\mathbf{v}$ is the direction that maximally disrupts the feature consistency between positive pairs.
We decompose the perturbation $\mathbf{r}$ into sensitive and tangent components:
\begin{equation}
\mathbf{r} = z \mathbf{v} + \mathbf{r}_{\perp}, \quad \text{where } z = \langle \mathbf{r}, \mathbf{v} \rangle \sim \mathcal{N}(0, \sigma^2).
\end{equation}
In high dimensions, the perturbation magnitude concentrates around its mean: $\|\mathbf{r}\| \approx \sigma \sqrt{D}$.

\textbf{Vanishing Sensitive Projection.}
Applying the Gaussian tail inequality, the probability that the projection $z$ accounts for a fraction $\alpha$ of the total perturbation scale is bounded by:
\begin{equation}
P(|z| \ge \alpha \sigma \sqrt{D}) \le 2 \exp\left( - \frac{D \alpha^2}{2} \right).
\end{equation}
This indicates that the effective perturbation along the gradient decays exponentially with $D$.

\textbf{Almost Sure Orthogonality.}
Defining the alignment angle $\theta$ via $\cos \theta = z / \|\mathbf{r}\|$, the concentration of measure on $\mathbb{S}^{D-1}$~\cite{17} implies that the probability mass concentrates on the equator $\mathbf{v}^\perp$. 
For any small angular tolerance $\delta > 0$, the probability of deviating from orthogonality is bounded by:
\begin{equation}
\label{eq:orthogonality_bound}
P(|\cos \theta| \ge \delta) \le 2 \exp\left( - \frac{D \delta^2}{2} \right).
\end{equation}
For $D \approx 256$, numerical analysis confirms that significant alignment is statistically rare (see quantitative verification in Appendix~\ref{app:proof_orthogonality}). While not strictly impossible, this implies $\mathbf{r} \perp \mathbf{v}$ with high probability. 
Consequently, optimizing over $\mathcal{T}_{iso}$ \textit{inefficiently} reduces to minimizing:
$
\mathbb{E}_{\mathbf{r} \in \mathcal{T}_{iso}}[\mathcal{L}(f(\mathbf{x}), f(\mathbf{x} + \mathbf{r}_{\perp}))]
$,
enforcing invariance predominantly along ``safe'' tangent directions. This creates a Robustness Illusion: the model tolerates high-energy noise but remains brittle along the critical direction $\mathbf{v}$.

\subsubsection{The Risk of Semantic Drift}
\label{sec:drift} % 【修改点】加了这个标签，方便后面引用

Geometric augmentations $\mathcal{T}_{geo}$, while anisotropic, are model-agnostic. Since physical signal boundaries are fixed, blind transformations often traverse the inter-class margins (e.g., rotating a QPSK symbol into an adjacent quadrant). We define this phenomenon as Semantic Drift: a conflict where the true physical label changes due to excessive perturbation, yet the contrastive loss forces the model to align the augmented view with the original anchor. This introduces noisy gradients that distort the learned manifold geometry.

\subsubsection{The Spectral Instability of Transformers}
\label{sec:problem_instability}

Beyond augmentation, the encoder architecture itself poses a geometric risk. We characterize the geometric stability of $f_\theta$ via the local Lipschitz constant:
\begin{equation}
    K_f(\mathbf{x}) \triangleq \sup_{\|\mathbf{r}\| \le \epsilon} \frac{\| f_\theta(\mathbf{x}+\mathbf{r}) - f_\theta(\mathbf{x}) \|}{\|\mathbf{r}\|}.
\end{equation}
~\cite{18} proved that the Lipschitz constant of standard dot-product self-attention is \textit{unbounded} with respect to sequence length $L$. This implies that naive Transformers inherently learn sharp decision boundaries, making them brittle to the anisotropic perturbations required to overcome the concentration of measure.

\section{Methodology}

\subsection{Overview of DyCo-CL}
We propose DyCo-CL, a geometry-aware framework tailored for few-shot AMR. As illustrated in Fig.~\ref{fig:framework}, the system orchestrates three interdependent components:
(1) a Dynamic-Consistency Framework utilizing VAA to overcome the concentration of measure; 
(2)a Signal-Adaptive Swin Backbone ensuring structural stability; and (3) Hierarchical Hybrid Knowledge Fusion, which anchors features with physical priors to counteract semantic drift. Together, they form a closed-loop system where geometric and physical constraints mutually reinforce robustness.

\begin{figure*}[t]
  \centering
  \includegraphics[width=0.9\textwidth]{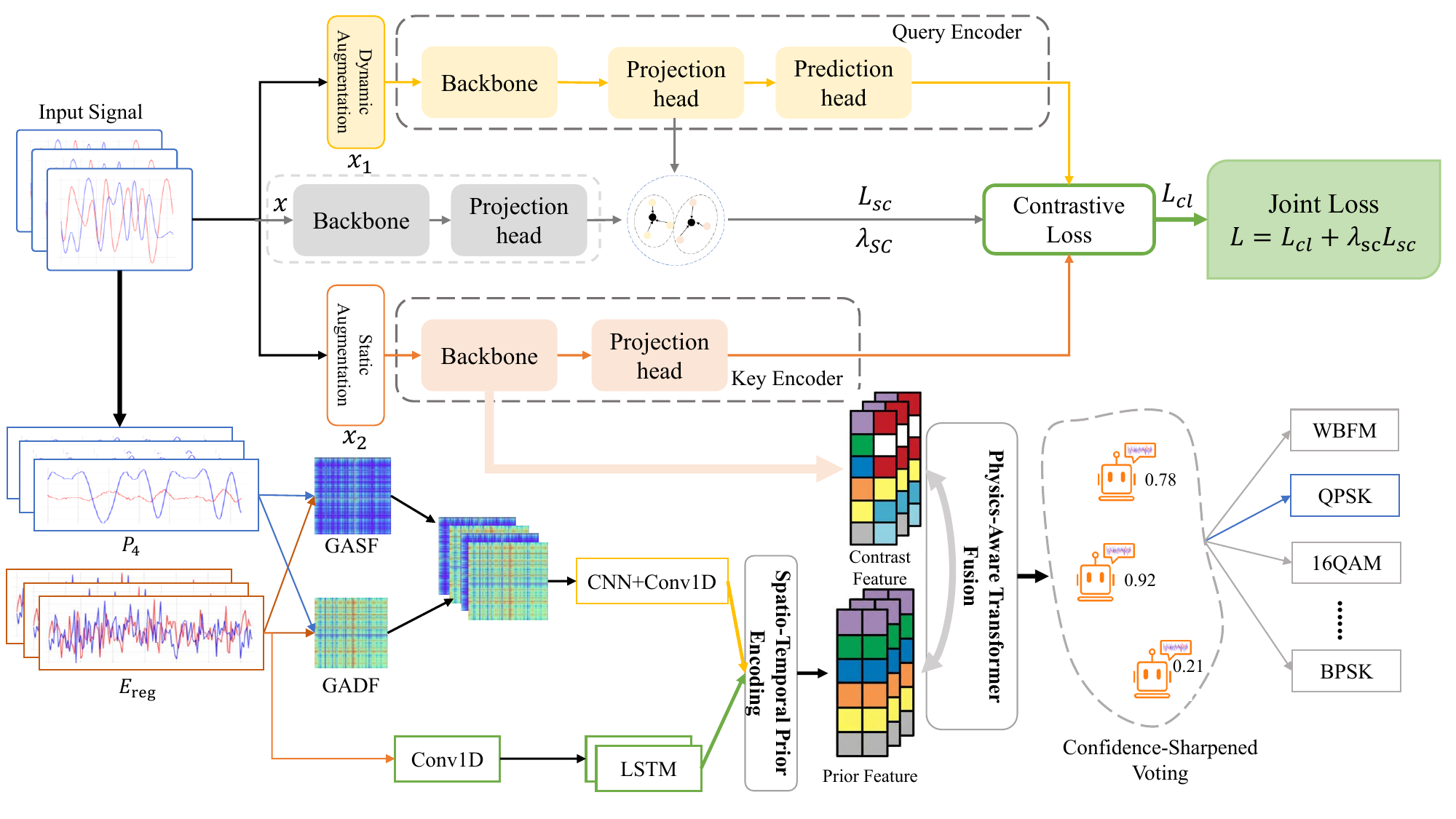} 
  \caption{The overall architecture of DyCo-CL. It features a Signal-Adaptive Swin Backbone optimized via Dynamic-Consistency Pre-training (utilizing VAA and SC loss). The learned features are subsequently enhanced by a Hierarchical Hybrid Knowledge Fusion module that integrates spatio-temporal physical priors.}
  \label{fig:framework}
\end{figure*}

\subsection{Dynamic-Consistency Framework}

Built upon MoCov3, we address the failure of isotropic noise due to the \textit{concentration of measure} (Sec.~\ref{sec:problem_formulation}). We propose a \textit{Dynamic-Consistency} strategy that couples VAA (to target sensitive directions) with a semantic alignment constraint (to ensure geometric stability).

\subsubsection{Asymmetric Augmentation Strategy}
\label{sec:method_vaa}

We employ an asymmetric design to construct robust positive pairs. One branch applies standard physical transformations to simulate channel impairments, while the other utilizes \textit{VAA} to generate ``hard positives''.

\textbf{Virtual Adversarial Augmentation.}
Unlike random noise which is almost surely orthogonal to the gradient, VAA actively seeks the perturbation direction that maximally alters the output distribution. 
We define the prediction probability $P(\cdot|\mathbf{x}; \theta)$ as the softmax-normalized distribution of similarities between the query $\mathbf{q}=f_\theta(\mathbf{x})$ and the dictionary keys $\{\mathbf{k}_i\}$:
\begin{equation}
    P(i|\mathbf{x}; \theta) = \frac{\exp(\mathbf{q} \cdot \mathbf{k}_i / \tau)}{\sum_{j} \exp(\mathbf{q} \cdot \mathbf{k}_j / \tau)},
\end{equation}
where $\tau$ is the temperature parameter. 
Using this distribution, we quantify local sensitivity via the Kullback-Leibler (KL) divergence:

\begin{equation}
\mathcal{J}(\mathbf{r}) = \text{KL}\left[ P(\cdot|\mathbf{x}; \theta) \| P(\cdot|\mathbf{x} + \mathbf{r}; \theta) \right].
\end{equation}

We seek the optimal perturbation $\mathbf{r}^*$ within an $\epsilon$-ball that maximizes this divergence:
\begin{equation}
\mathbf{r}^* = \mathop{\arg\max}_{\|\mathbf{r}\|_2 \le \epsilon} \mathcal{J}(\mathbf{r}).
\end{equation}

\begin{figure}[t]
    \centering
    \includegraphics[width=\linewidth]{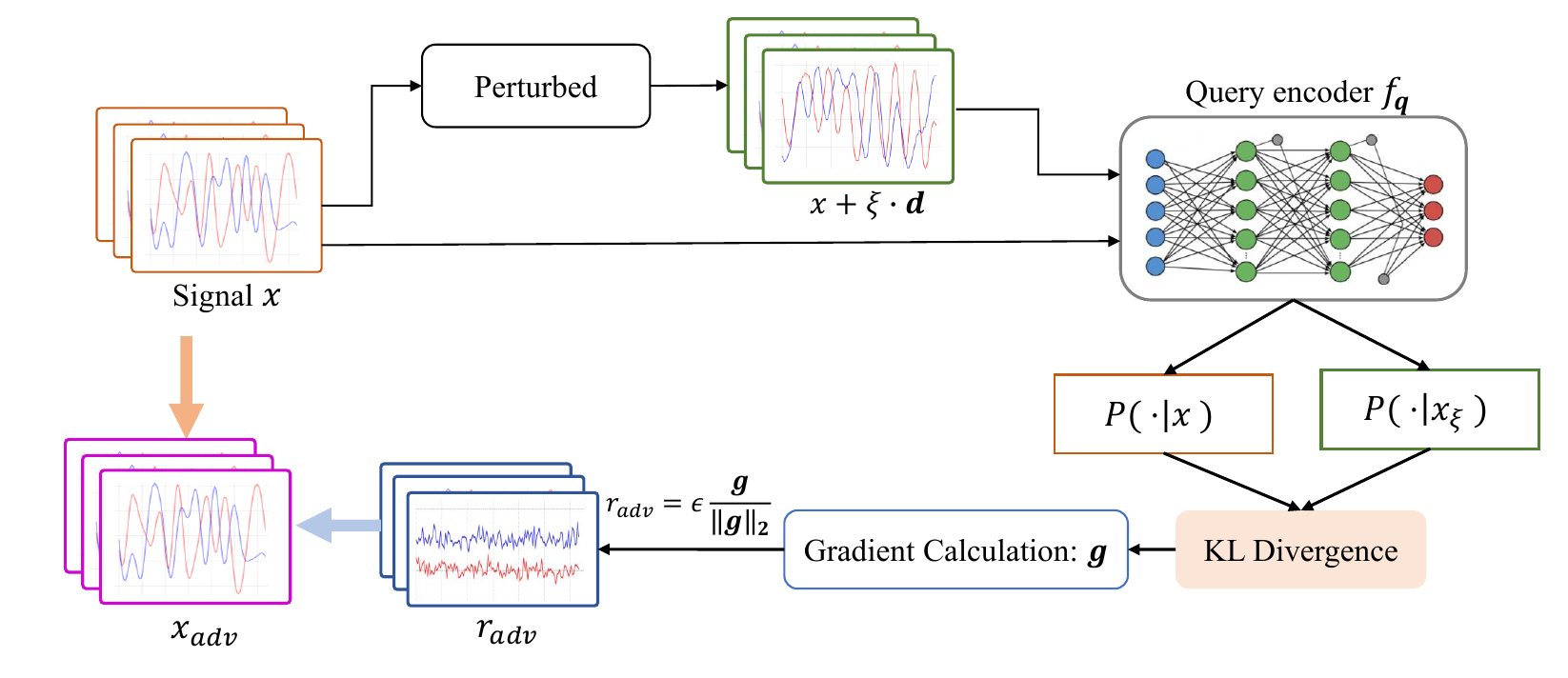} 
    % \vspace{-0.2cm}
    \caption{The generation process of Virtual Adversarial Augmentation.}
    \label{fig:vaa_process}
\end{figure}

As derived in Appendix~\ref{app:proof_vaa}, the optimal perturbation $\mathbf{r}^*$ aligns with the dominant eigenvector of the Hessian matrix of $\mathcal{J}(\mathbf{r})$ at $\mathbf{r}=\mathbf{0}$. 
As illustrated in Figure~\ref{fig:vaa_process}, we approximate this direction efficiently via one-step Power Iteration.

Specifically, starting from a random unit vector $\mathbf{d}$, we estimate the Hessian-vector product using a finite-difference approximation with a small scalar $\xi$:
\begin{equation}
\mathbf{g} = \nabla_{\mathbf{r}} \mathcal{J}(\mathbf{r})\big|_{\mathbf{r}=\xi \mathbf{d}}, 
\quad 
\mathbf{x}_{adv} = \mathbf{x} + \epsilon \frac{\mathbf{g}}{\|\mathbf{g}\|_2},
\end{equation}
where $\xi$ controls the magnitude of the probing perturbation.

\textbf{Standard Physical Augmentation.}
The second view $\mathbf{x}_{weak}$ is generated by stochastically composing transformations from a domain-specific set $\mathcal{T}_{phy}$. This ensures invariance to common physical distortions; crucially, we restrict the transformation magnitude to a conservative regime to prevent the augmented samples from crossing class boundaries, thereby avoiding the semantic drift discussed in Sec.~\ref{sec:drift}. Detailed formulations are provided in Appendix~\ref{app:aug_details}.

\subsubsection{Semantic Consistency Regularization}
\label{sec:consistency_loss}

To counteract the risk of \textit{semantic drift} inherent in VAA and enforce geometric stability, we propose a Semantic Consistency Loss ($\mathcal{L}_{SC}$). Crucially, this objective implicitly minimizes the encoder's \textit{local Lipschitz constant}, promoting \textit{intra-class compactness} and output stability.

We explicitly constrain the adversarial representation $\mathbf{z}_{adv} = f_{\boldsymbol{q}}(\mathbf{x}_{adv})$ to the local neighborhood of the anchor $\mathbf{z} = f_{\boldsymbol{q}}(\mathbf{x})$:
\begin{equation}
    \mathcal{L}_{SC} = 1 - \frac{\text{sg}(\mathbf{z})^\top \mathbf{z}_{adv}}{\|\text{sg}(\mathbf{z})\|_2 \|\mathbf{z}_{adv}\|_2},
\end{equation}
where $\text{sg}(\cdot)$ denotes the stop-gradient operator.

\textbf{Geometric Interpretation.} 
The stop-gradient fixes $\mathbf{z}$ as a semantic centroid. $\mathcal{L}_{SC}$ exerts a restoring force against VAA-induced drift. As we rigorously prove in Sec.~\ref{sec:theory}, this mechanism mathematically functions as a spectral regularizer, ensuring the geometric stability of the backbone.

\subsubsection{Overall Pre-training Objective}

The final objective combines the instance-discrimination contrastive loss ($\mathcal{L}_{NCE}$) with semantic consistency. The total loss is defined as:
\begin{equation}
    \mathcal{L}_{total} = \mathcal{L}_{NCE}(\mathbf{x}_{adv}, \mathbf{x}_{weak}) + \lambda_{sc} \cdot \mathcal{L}_{SC}(\mathbf{x}, \mathbf{x}_{adv}),
\end{equation}
where $\lambda_{sc}$ balances representation diversity and semantic fidelity. The training procedure is summarized in Algorithm~\ref{alg:dyco_pretrain}.

\begin{algorithm}[tb]
   \caption{DyCo-AMR Pre-training Algorithm}
   \label{alg:dyco_pretrain}
\begin{algorithmic}
   \STATE {\bfseries Input:} $f_{\boldsymbol{q}}$, $f_{\boldsymbol{k}}$ (encoders), $m$ (momentum), $\tau$ (temp), $\epsilon$, $\lambda_{sc}$

   \FOR{each minibatch $\mathbf{x}$ in Dataset}
       % 1. 生成视图
       \STATE $\mathbf{x}_{weak} = \text{PhysAug}(\mathbf{x}), \quad \mathbf{x}_{adv} = \text{VAA}(\mathbf{x}, f_{\boldsymbol{q}}, \epsilon)$

       % 2. 对比学习前向传播 (f_q 输出最终预测 q)
       \STATE $\boldsymbol{q} \leftarrow f_{\boldsymbol{q}}(\mathbf{x}_{adv}), \quad \boldsymbol{k} \leftarrow f_{\boldsymbol{k}}(\mathbf{x}_{weak})$ \hfill \textit{(k: no grad)}
   
       % 3. 计算 NCE Loss
       \STATE $l_{\text{pos}} = \boldsymbol{q} \cdot \boldsymbol{k}^{+}, \quad l_{\text{neg}} = \boldsymbol{q} \cdot \boldsymbol{k}^{-}$
       \STATE $\mathcal{L}_{NCE} = -\log \dfrac{\exp(l_{\text{pos}}/\tau)}{\exp(l_{\text{pos}}/\tau) + \sum \exp(l_{\text{neg}}/\tau)}$
    
       % 4. 计算 Consistency Loss
       % 【核心修改】：在这里加注释，说明 z 取自 projection 层
       \STATE $\boldsymbol{z} \leftarrow f_{\boldsymbol{q}}^{\text{proj}}(\mathbf{x})$ \hfill \textit{(projection output, no grad)}
       \STATE $\boldsymbol{z}_{adv} \leftarrow f_{\boldsymbol{q}}^{\text{proj}}(\mathbf{x}_{adv})$ \hfill \textit{(projection output)}
       \STATE $\mathcal{L}_{SC} = 1 - \dfrac{\boldsymbol{z} \cdot \boldsymbol{z}_{adv}}{\|\boldsymbol{z}\| \|\boldsymbol{z}_{adv}\|}$
     
       % 5. 反向传播与更新
       \STATE $\mathcal{L}_{total} = \mathcal{L}_{NCE} + \lambda_{sc} \cdot \mathcal{L}_{SC}$
       \STATE Update $f_{\boldsymbol{q}}$ via Backprop: $\nabla \mathcal{L}_{total}$
       \STATE Update $f_{\boldsymbol{k}}$ via Momentum: $\theta_{\boldsymbol{k}} \gets m \theta_{\boldsymbol{k}} + (1-m) \theta_{\boldsymbol{q}}$
   \ENDFOR
\end{algorithmic}
\end{algorithm}

\subsection{Signal-Adaptive Swin Backbone}
\label{sec:backbone}

Standard ViTs lack local inductive biases, while 2D Swin Transformers suffer from dimensionality mismatch and spectral instability. To address these, we propose a \textit{Signal-Adaptive Swin Backbone} with a convolutional stem and 1D structural adaptation.

\begin{figure}[t]
    \centering
    \includegraphics[width=0.98\linewidth]{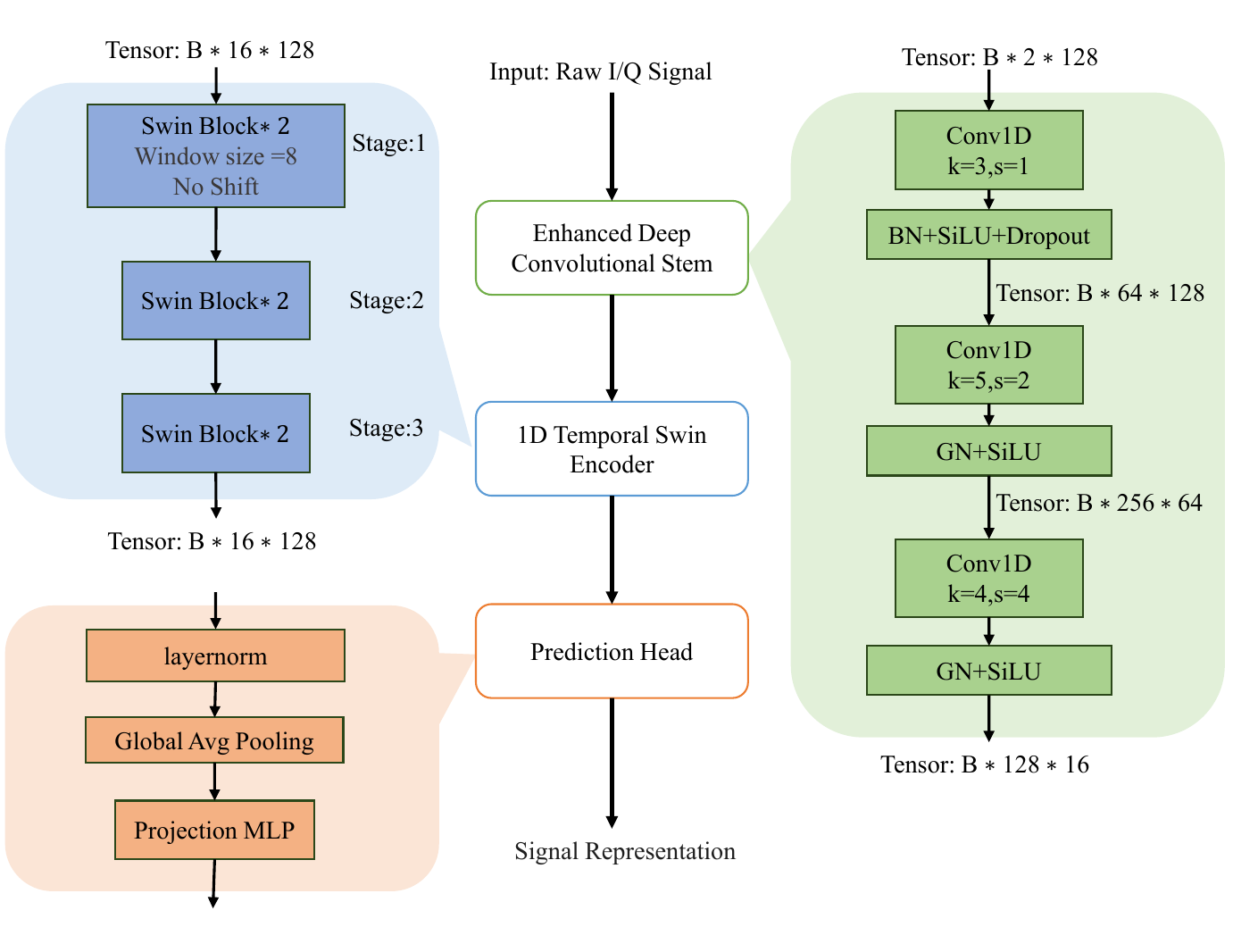} 
    \caption{Structure of the Signal-Adaptive Swin Backbone. }
    \label{fig:swin_backbone}
\end{figure}

\subsubsection{Deep Convolutional Stem}
To mitigate the noise sensitivity of linear embeddings, we design a hierarchical \textit{Deep Convolutional Stem} (Fig.~\ref{fig:swin_backbone}) comprising three stacked 1D convolution layers. This module acts as a learnable low-pass filter to suppress high-frequency artifacts. Furthermore, its overlapping receptive fields enable \textit{soft tokenization}, preserving phase continuity across token boundaries while progressively downsampling temporal resolution.

\subsubsection{1D Swin Encoder}
We partition the sequence $\mathbf{z} \in \mathbb{R}^{L \times D}$ into non-overlapping 1D windows of size $M$ and compute local self-attention:
\begin{equation}
\text{Attention}(Q, K, V) = \text{Softmax}\left(\frac{QK^\top}{\sqrt{d}} + \mathbf{B}_{1D}\right)V.
\end{equation}
Leveraging the deep stem's receptive field for inter-window information exchange, we employ a \textit{fixed window strategy} without shifting. Crucially, this non-overlapping partition enforces a block-diagonal Jacobian structure, structurally bounding the Lipschitz constant to ensure spectral stability (as justified in Sec.~\ref{sec:theory}).

\subsection{Hierarchical Hybrid Knowledge Fusion}
\label{sec:method_fusion}

To counteract VAA-induced \textit{semantic drift} in few-shot regimes, we introduce a \textit{Hierarchical Hybrid Knowledge Fusion} module (Fig.~\ref{fig:fusion_detail}). This two-stage mechanism bridges the semantic gap by anchoring the manifold with physical priors.

\begin{figure}[t]
    \centering
    \begin{subfigure}{0.46\linewidth}
        \centering
        \includegraphics[width=\linewidth]{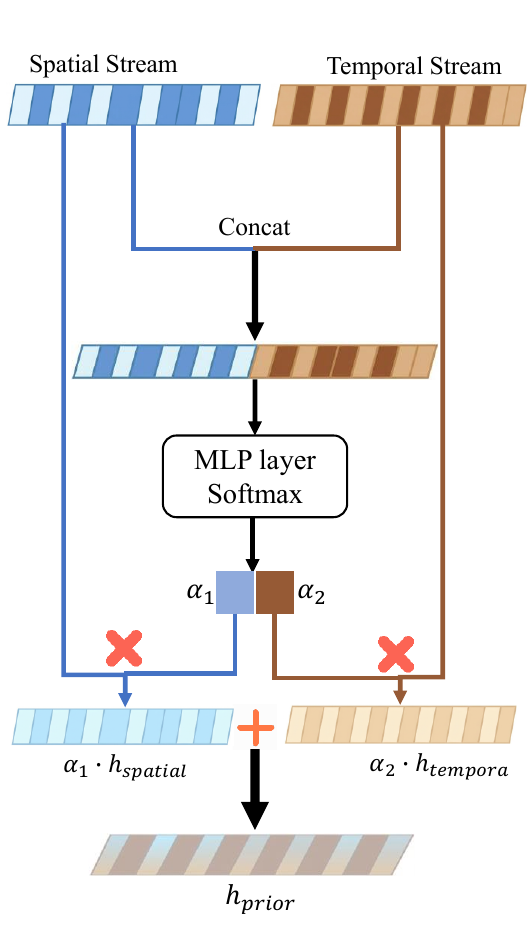}
        \caption{Stage 1: Spatio-Temporal Prior Encoding}
    \end{subfigure}
    \hfill
    \begin{subfigure}{0.46\linewidth}
        \centering
        \includegraphics[width=\linewidth]{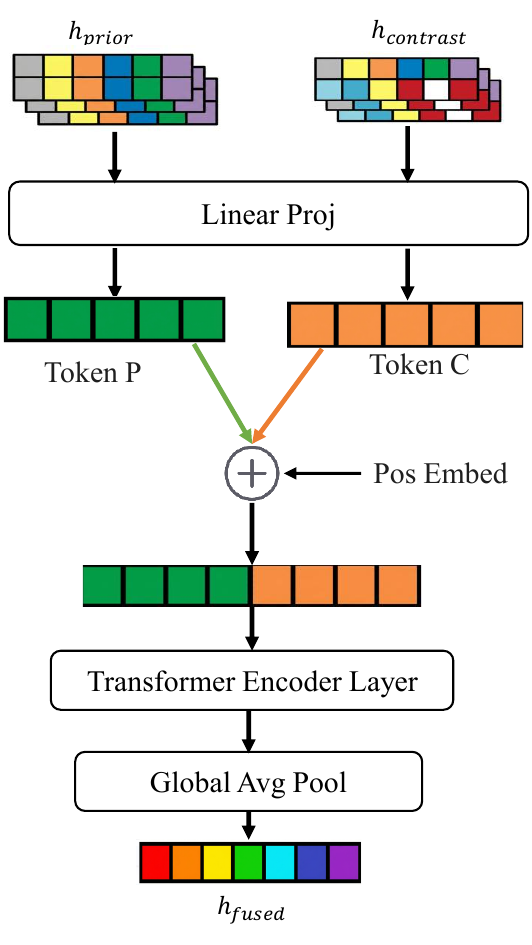}
        \caption{Stage 2: Physics-Aware Transformer Fusion}
    \end{subfigure}
    \caption{Architecture of the Fusion Module.}
    \label{fig:fusion_detail}
\end{figure}

\subsubsection{Stage 1: Spatio-Temporal Prior Encoding}
We first synthesize a robust physical descriptor $\mathbf{h}_{prior}$ to serve as a semantic reference. We extract two invariants: the \textit{Fourth-Order Cycle Spectrum} ($\mathbf{P}_{4}$) for cyclostationary signatures, and the \textit{PSD-Regularized Envelope} ($\mathbf{E}_{reg}$) for amplitude stability (see Appendix~\ref{app:prior_details}).
These features are processed via a dual-stream encoder: a \textit{Spatial Stream} (GAF + 2D CNN) and a \textit{Temporal Stream} (Bi-LSTM). A learnable gating network dynamically fuses them:
\begin{equation}
[\alpha_1,\alpha_2] 
= \mathrm{Softmax}\!\left(\mathrm{MLP}\!\left([\mathbf{h}_{spatial};\mathbf{h}_{temporal}]\right)\right).
\end{equation}
\begin{equation}
\mathbf{h}_{prior} 
= \mathrm{FC}\!\left(\alpha_1 \mathbf{h}_{spatial} + \alpha_2 \mathbf{h}_{temporal}\right).
\end{equation}

\subsubsection{Stage 2: Physics-Aware Transformer Fusion}
To enforce physical consistency, we employ a Transformer to model the non-linear interaction between the data-driven feature $\mathbf{h}_{contrast}$ and the physical prior $\mathbf{h}_{prior}$. We construct a composite sequence $\mathbf{T}$ with learnable type embeddings $\mathbf{E}_{type}$:
\begin{equation}
    \mathbf{T} = [\text{Linear}(\mathbf{h}_{prior}); \text{Linear}(\mathbf{h}_{contrast})] + \mathbf{E}_{type}.
\end{equation}
Through self-attention, the ``Physics Token'' acts as a stable query to dynamically calibrate the ``Data Token'', correcting potential semantic drift. The refined features are aggregated via GAP to yield $\mathbf{h}_{fused}$.

Finally, to mitigate variance in few-shot settings, we employ a \textit{confidence-aware ensemble} of $K=3$ heads with a sharpening mechanism:
\begin{equation}
    \hat{\mathbf{y}} = \text{Normalize}\left( \sum_{k=1}^{K} \left[ \text{Softmax}(\text{MLP}_k(\mathbf{h}_{fused})) \right]^2 \right).
\end{equation}
This quadratic weighting suppresses uncertain predictions, filtering out noise from ambiguous classifiers.

\section{Theoretical Analysis: Geometric Stability via Structure and Optimization}
\label{sec:theory}

In this section, we provide a theoretical justification for DyCo-CL. We demonstrate that our framework stabilizes the learning process through two complementary mechanisms: \textit{structural constraints} (via Backbone) and \textit{spectral optimization} (via Loss).

\subsection{Structural Constraint via Signal-Adaptive Swin}
\label{sec:theory_swin}
Standard global attention yields an unbounded Jacobian $\|\mathbf{J}_{global}\|_2 \propto \sqrt{L}$. In contrast, our fixed-window strategy enforces a block-diagonal Jacobian $\mathbf{J}_{fixed}$:
\begin{equation}
    \mathbf{J}_{fixed} = \text{diag}(\mathbf{J}_1, \dots, \mathbf{J}_{N_w}).
\end{equation}
The global Lipschitz constant is thus determined solely by the local window capacity:
\begin{equation}
    \|\mathbf{J}_{fixed}\|_2 = \max_{k} \|\mathbf{J}_k\|_2 = \max_{k} \sigma_{max}(\mathbf{J}_k).
\end{equation}
Since window size $M$ is fixed, this decouples the spectral norm from length $L$ (Proof in Appendix~\ref{app:swin_proof}). By bounding the attention mechanism (the primary instability source), this design ensures global stability despite local stem dependencies, establishing a prerequisite for the optimization below.

\subsection{Optimization: DyCo-CL as Spectral Regularization}
With the structural bound in place, DyCo-CL minimizes the effective Lipschitz constant via a Min-Max optimization strategy.

\textbf{Geometric Equivalence.} 
While our implementation uses Cosine distance ($\mathcal{L}_{SC}$), we base our theory on the squared Euclidean distance. Since representations are projected onto the unit hypersphere ($\|\mathbf{z}\|_2=\|\mathbf{z}_{adv}\|_2=1$), these objectives are strictly equivalent. As proved in Appendix~\ref{app:vaa_equivalence}, minimizing the cosine loss is mathematically identical to minimizing the Euclidean error:
\begin{equation}
    \mathcal{L}_{SC} = 1 - \mathbf{z}^\top \mathbf{z}_{adv} = \frac{1}{2} \| \mathbf{z} - \mathbf{z}_{adv} \|_2^2.
\end{equation}
Therefore, we analyze the following surrogate objective without loss of generality:
\begin{equation}
    \min_\theta \mathcal{L}_{SC} \cong \min_\theta \mathbb{E}_{\mathbf{x}} \left[ \max_{\|\mathbf{r}\| \le \epsilon} \frac{1}{2} \| f_\theta(\mathbf{x}) - f_\theta(\mathbf{x}+\mathbf{r}) \|_2^2 \right].
\end{equation}

This surrogate is well-posed, as its inner maximization is asymptotically equivalent to the VAA objective of maximizing KL-divergence (see Appendix~\ref{app:vaa_equivalence} for a formal proof).

Applying a first-order Taylor expansion (valid for small perturbation radii $\epsilon \ll 1$; see derivation in Appendix~\ref{app:spectral_deriv}), the inner maximization becomes a Rayleigh quotient problem:
\begin{equation}
\max_{\|\mathbf{r}\| \le \epsilon} \| \mathbf{J}_\theta(\mathbf{x}) \mathbf{r} \|_2^2 = \epsilon^2 \sigma_{max}^2(\mathbf{J}_\theta(\mathbf{x})).
\end{equation}

Thus, the objective simplifies to minimizing the spectral norm:
\begin{equation}
\min_\theta \mathbb{E}_{\mathbf{x}} \left[ \epsilon^2 \cdot \sigma_{max}^2(\mathbf{J}_\theta(\mathbf{x})) \right].
\end{equation}

\begin{theorem}[Implicit Spectral Regularization]
As the perturbation magnitude $\epsilon \to 0$, minimizing the Semantic Consistency loss under VAA is equivalent to minimizing the expected squared local Lipschitz constant, i.e.,
\begin{equation}
\min_\theta \mathcal{L}_{SC} \iff \min_\theta \mathbb{E}_{\mathbf{x}}[K_f(\mathbf{x})^2].
\end{equation}
\end{theorem}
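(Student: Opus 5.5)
The argument chains three equalities, each valid as $\epsilon \to 0$, and then notes that multiplying an objective by the positive constant $\epsilon^2/2$ does not change its minimizers. Throughout, $f_\theta$ is the projected encoder, normalized to the unit sphere, and is assumed $C^2$ in $\mathbf{x}$ with Jacobian $\mathbf{J}_\theta(\mathbf{x})$ and Hessian bounded uniformly on a neighbourhood of the data. This regularity assumption is what turns the informal "$\cong$" steps of the preceding discussion into $o(\epsilon^2)$ statements.

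\textbf{Step 1: cosine loss to squared distance.} For unit vectors, $1-\mathbf{z}^\top\mathbf{z}_{adv} = \tfrac12\|\mathbf{z}-\mathbf{z}_{adv}\|_2^2$. Hence $\mathcal{L}_{SC}$ evaluated at $\mathbf{x}_{adv} = \mathbf{x} + \mathbf{r}^*$ equals $\tfrac12\|f_\theta(\mathbf{x}) - f_\theta(\mathbf{x}+\mathbf{r}^*)\|_2^2$.

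\textbf{Step 2: VAA direction to worst-case direction.} I invoke the equivalence stated in the appendix: the KL maximizer $\mathbf{r}^*$ agrees, to leading order in $\epsilon$, with the maximizer of $\|f_\theta(\mathbf{x}+\mathbf{r}) - f_\theta(\mathbf{x})\|$. The reason is that the second-order expansion of the KL divergence is $\tfrac12 \mathbf{r}^\top \mathbf{J}^\top \mathbf{F}\mathbf{J}\mathbf{r}$, where $\mathbf{F}$ is the Fisher metric of the softmax over keys. The difference between the two objectives is therefore only a change of metric in feature space. I would either assume $\mathbf{F}$ is isotropic on the relevant subspace, or assume its eigenvalues are bounded above and below, $c\,\mathbf{I} \preceq \mathbf{F} \preceq C\,\mathbf{I}$, which makes the two objectives equivalent up to constants.

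\textbf{Step 3: Taylor expansion.} Expand $f_\theta(\mathbf{x}+\mathbf{r}) - f_\theta(\mathbf{x}) = \mathbf{J}_\theta(\mathbf{x})\mathbf{r} + R(\mathbf{r})$ with remainder $\|R(\mathbf{r})\| \le \tfrac12 H\|\mathbf{r}\|^2$, where $H$ bounds the Hessian. Then $\max_{\|\mathbf{r}\|\le\epsilon}\tfrac12\|\cdot\|^2 = \tfrac12\epsilon^2\sigma_{\max}^2(\mathbf{J}_\theta(\mathbf{x})) + O(\epsilon^3)$. The main term is the Rayleigh quotient computation already given in the text.

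\textbf{Step 4: $K_f$ tends to the spectral norm.} The same expansion gives $K_f(\mathbf{x}) = \sigma_{\max}(\mathbf{J}_\theta(\mathbf{x})) + O(\epsilon)$. Consequently $\tfrac{2}{\epsilon^2}\mathcal{L}_{SC} = \mathbb{E}_{\mathbf{x}}[K_f(\mathbf{x})^2] + O(\epsilon)$, uniformly in $\theta$ over the admissible set.

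To conclude, the two objectives differ by a vanishing uniform error after rescaling. It follows that minimizers and near-minimizers of one are near-minimizers of the other in the limit $\epsilon\to 0$, for example via $\Gamma$-convergence or a simple uniform-approximation argument. This is the precise content of "$\iff$".

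The main obstacle is Step 2. VAA maximizes a KL divergence and not a Euclidean distance, and the one-step power iteration yields only an approximation of the dominant eigenvector. The top eigenvector of $\mathbf{J}^\top\mathbf{F}\mathbf{J}$ need not coincide with that of $\mathbf{J}^\top\mathbf{J}$. I would handle this with the metric-equivalence bound above, which yields a two-sided inequality
\[
c\,\mathbb{E}[K_f^2] \lesssim \tfrac{2}{\epsilon^2}\mathcal{L}_{SC} \le \mathbb{E}[K_f^2].
\]
I would state exact equivalence only under an isotropy assumption, and absorb the power-iteration error by assuming a spectral gap.
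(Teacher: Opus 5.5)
Your proposal is correct and follows the paper's proof essentially step for step: the unit-sphere identity $1-\mathbf{z}^\top\mathbf{z}_{adv}=\tfrac12\|\mathbf{z}-\mathbf{z}_{adv}\|_2^2$, identification of the VAA direction with the top right singular vector of $\mathbf{J}_\theta(\mathbf{x})$ via the Fisher metric, linearization, the Rayleigh quotient, and dropping the factor $\epsilon^2$. Your additions---explicit remainders, a uniform-in-$\theta$ error, reconciling the sup-ratio and spectral-norm definitions of $K_f$, and stating as an isotropy hypothesis what the paper simply asserts as $\mathbf{F}\propto\mathbf{J}_\theta^\top\mathbf{J}_\theta$---turn the paper's $\approx$ and $\propto$ into an honest near-minimizer statement. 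The one remark to revise is the power-iteration fix: one step from a random start has Rayleigh quotient $\sum_i\lambda_i^3 d_i^2/\sum_i\lambda_i^2 d_i^2$, where $d_i$ are the start vector's coordinates in the Hessian eigenbasis. This is close to $\lambda_1$ only when the spectrum away from $\lambda_1$ is negligible in aggregate, which a spectral gap alone does not give. So either assume that condition, or, as the paper does, take VAA to be the exact KL maximizer.
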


\subsection{Generalization Bound for Few-Shot Learning}
We link this regularization to generalization via statistical learning theory~\cite{19}(derivation provided in Appendix~\ref{app:gen_bound}). The generalization error $\mathcal{E}_{gen}$ is bounded by:
\begin{equation}
    \mathcal{E}_{gen} \le \hat{\mathcal{E}} 
    + \mathcal{O}\left(
    \frac{\mathbb{E}_{\mathbf{x}\sim\mathcal{D}}
    \left[\sigma_{max}(\mathbf{J}_\theta(\mathbf{x}))\right]}{\sqrt{N}}
    \right).
\end{equation}

In few-shot scenarios (small $N$), the error is dominated by the 
expected Lipschitz constant $\mathbb{E}_{\mathbf{x}\sim\mathcal{D}}[K_f(\mathbf{x})]$.
By explicitly minimizing this quantity (via minimizing $\sigma_{max}$) in DyCo-CL,
we tighten the bound, ensuring robust transferability.

\section{Experiments}
\label{sec:experiments}

\subsection{Experimental Setup}
\textbf{Datasets.} We evaluate our framework on two standard benchmarks. RML2016.10a comprises 11 modulation schemes with Signal-to-Noise Ratios (SNRs) ranging from -20dB to 18dB in 2dB steps. To test scalability, we also employ the larger RML2018.01a, which includes 24 modulation types across an SNR range of -20dB to 30dB. Both datasets consist of raw complex-valued I/Q sequences.

\textbf{Implementation.} Pre-training runs for 50 epochs (AdamW, lr=$3e^{-4}$). We set VAA radius $\epsilon=0.3$, power-iteration step size $\xi=10^{-6}$, and consistency weight $\lambda_{sc}=0.6$. For few-shot settings, we sample $N \in \{1, 2, 5, 10\}$ instances per class per SNR. Sensitivity analysis is provided in Appendix~\ref{app:sensitivity}.

\subsection{Baselines}
We compare DyCo-CL against six state-of-the-art semi-supervised methods: \textit{CMSSAN}~\cite{6}, \textit{AMC-CNN}~\cite{9}, \textit{ResNet50-MoCo}~\cite{3}, \textit{EET-MoCo}~\cite{4}, \textit{SSCL-AMC}~\cite{5}, and \textit{APFS}~\cite{8}.
Unless otherwise specified, all SSL baselines are re-implemented by us and pre-trained on the same data using identical training protocols and compute budgets as DyCo-CL to ensure fair comparison.

\subsection{Main Results}

\textbf{Few-Shot Performance.}
As shown in Fig.~\ref{fig:acc_vs_samples}, DyCo-CL consistently outperforms all baselines. Notably, in the extreme $N=1$ setting, it achieves 43.84\% accuracy, surpassing the SOTA by 6.27\%. This aligns with our theoretical analysis (Sec.~\ref{sec:theory}): by minimizing the Lipschitz constant, DyCo-CL tightens the generalization bound, enabling robust transfer even with minimal supervision.

\textbf{Robustness Across SNRs.}
Fig.~\ref{fig:acc_vs_snr} shows that DyCo-CL consistently outperforms prior methods across SNRs.
On RML2016.10a, it exceeds the SOTA by 7.7\% at 10\,dB ($N=1$), and on RML2018.01a
(Fig.~\ref{fig:acc_vs_snr_rml2018}) maintains a 7.54\% margin for SNR $>6$\,dB.
Notably, DyCo-CL exhibits an earlier performance inflection at low SNRs,
indicating stronger noise robustness, which we attribute to the Signal-Adaptive Backbone
and its Deep Convolutional Stem that suppresses high-frequency noise prior to tokenization.

% --- Figures (保持你的图片引用不变) ---
\begin{figure}[t]
    \centering
    \includegraphics[width=0.9\linewidth]{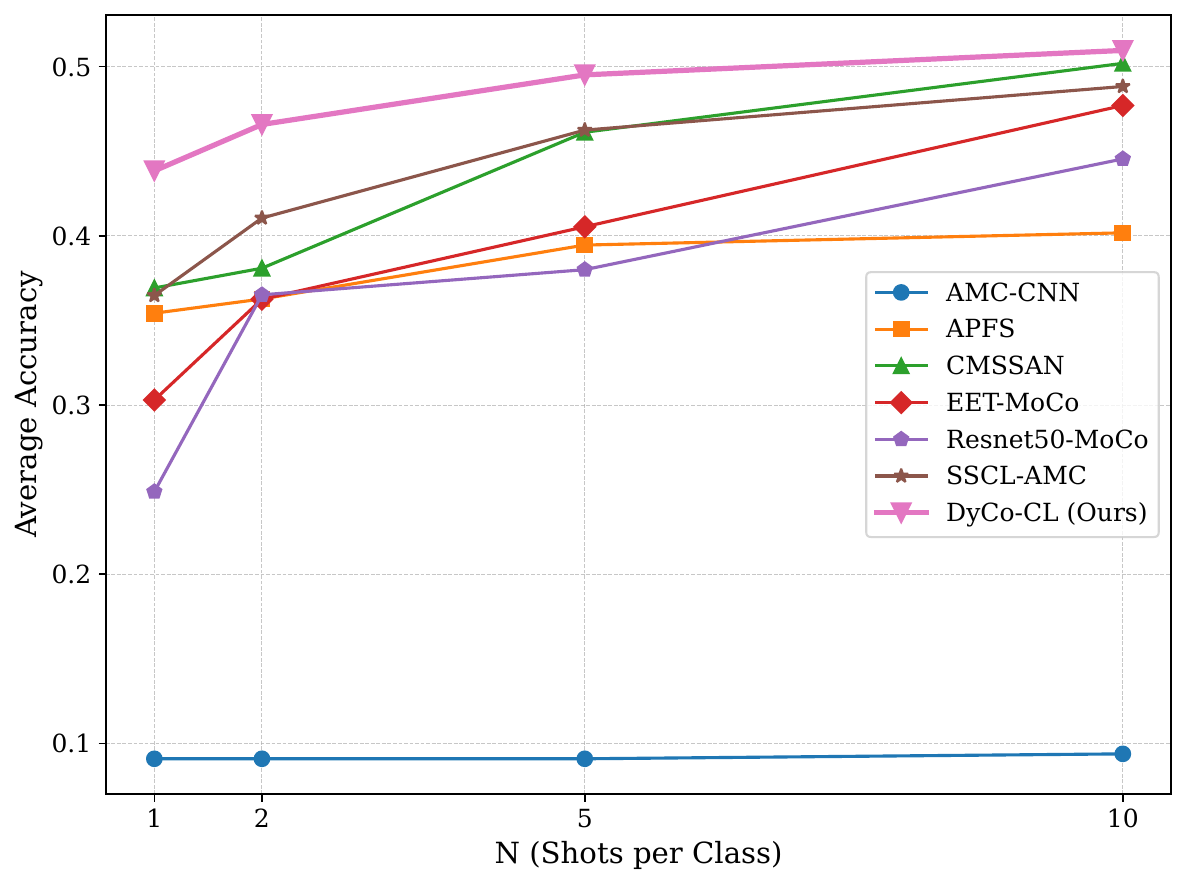} 
    \caption{Classification accuracy comparison with varying $N$ on RML2016.10a.}
    \label{fig:acc_vs_samples}
\end{figure}

\begin{figure}[t]
\centering
\begin{subfigure}[b]{0.48\linewidth}
\centering
\includegraphics[width=\linewidth]{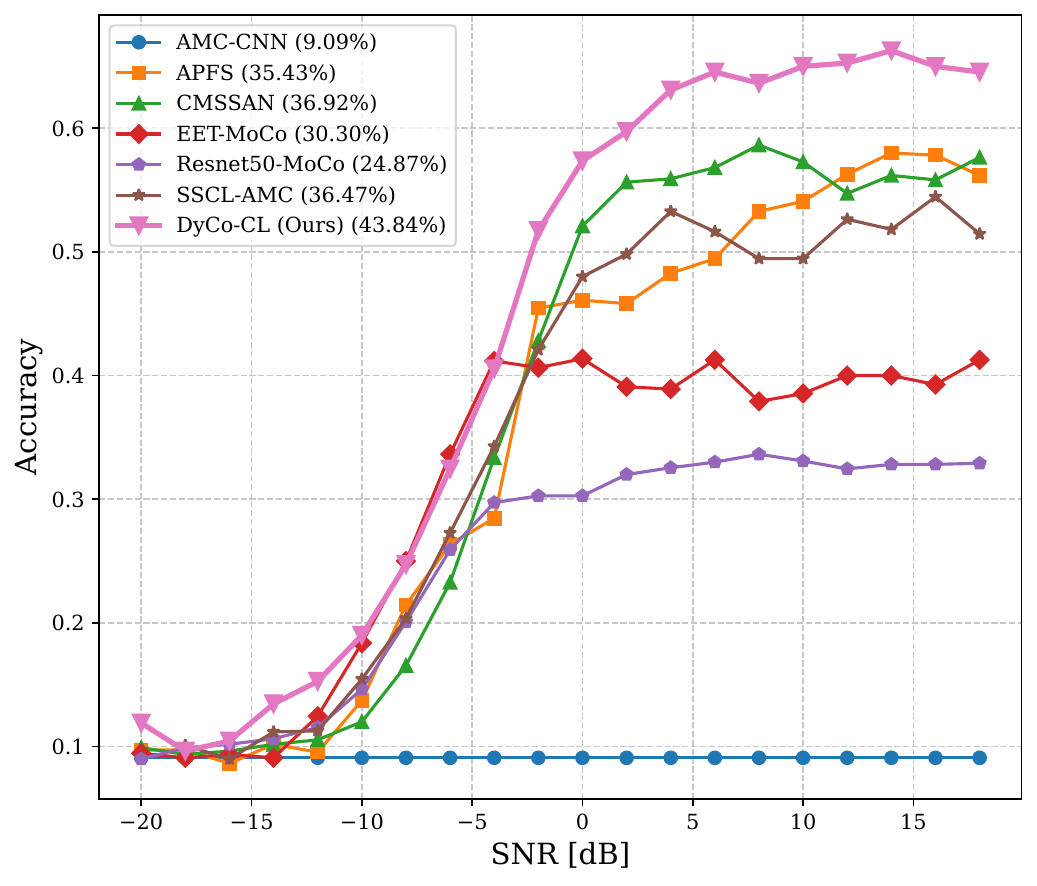}
\caption{$N=1$}
\end{subfigure}
\hfill 
\begin{subfigure}[b]{0.48\linewidth}
\centering
\includegraphics[width=\linewidth]{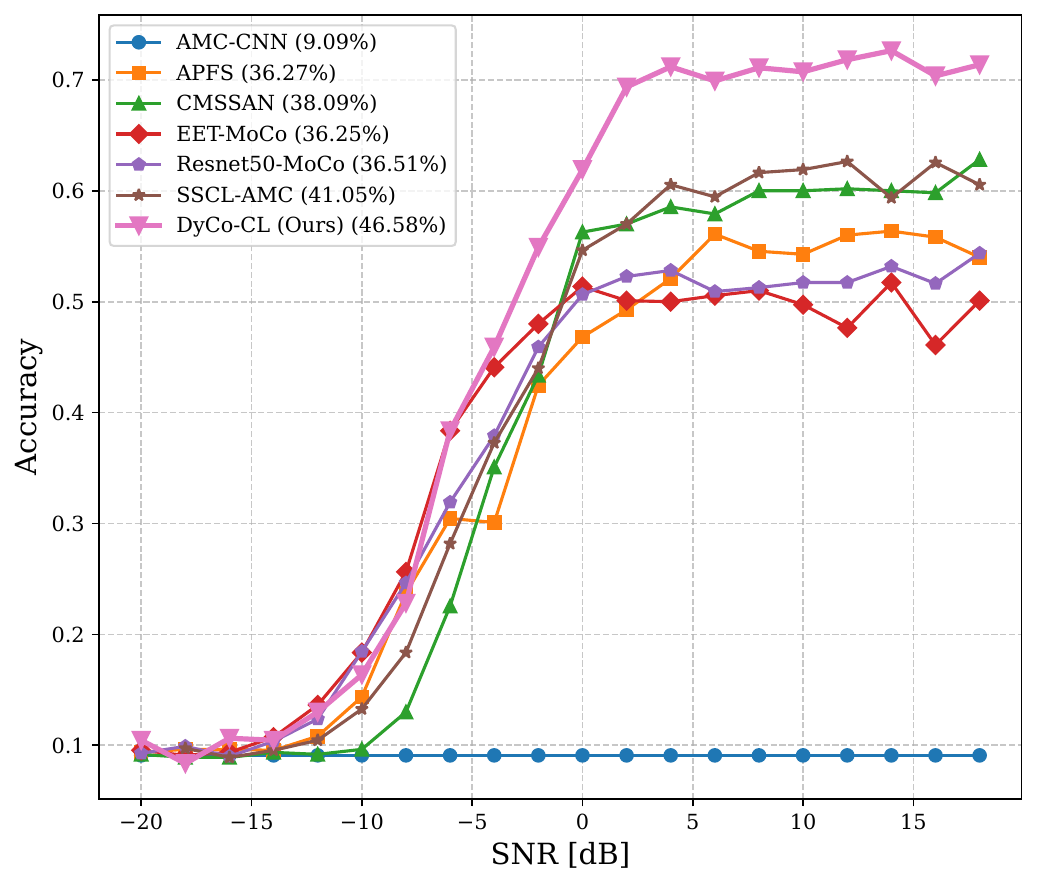}
\caption{$N=2$}
\end{subfigure}
\caption{Accuracy vs. SNR comparison under low-data regimes on RML2016.10a.}
\label{fig:acc_vs_snr}
\end{figure}

\begin{figure}[t]
\centering
\includegraphics[width=0.9\linewidth]{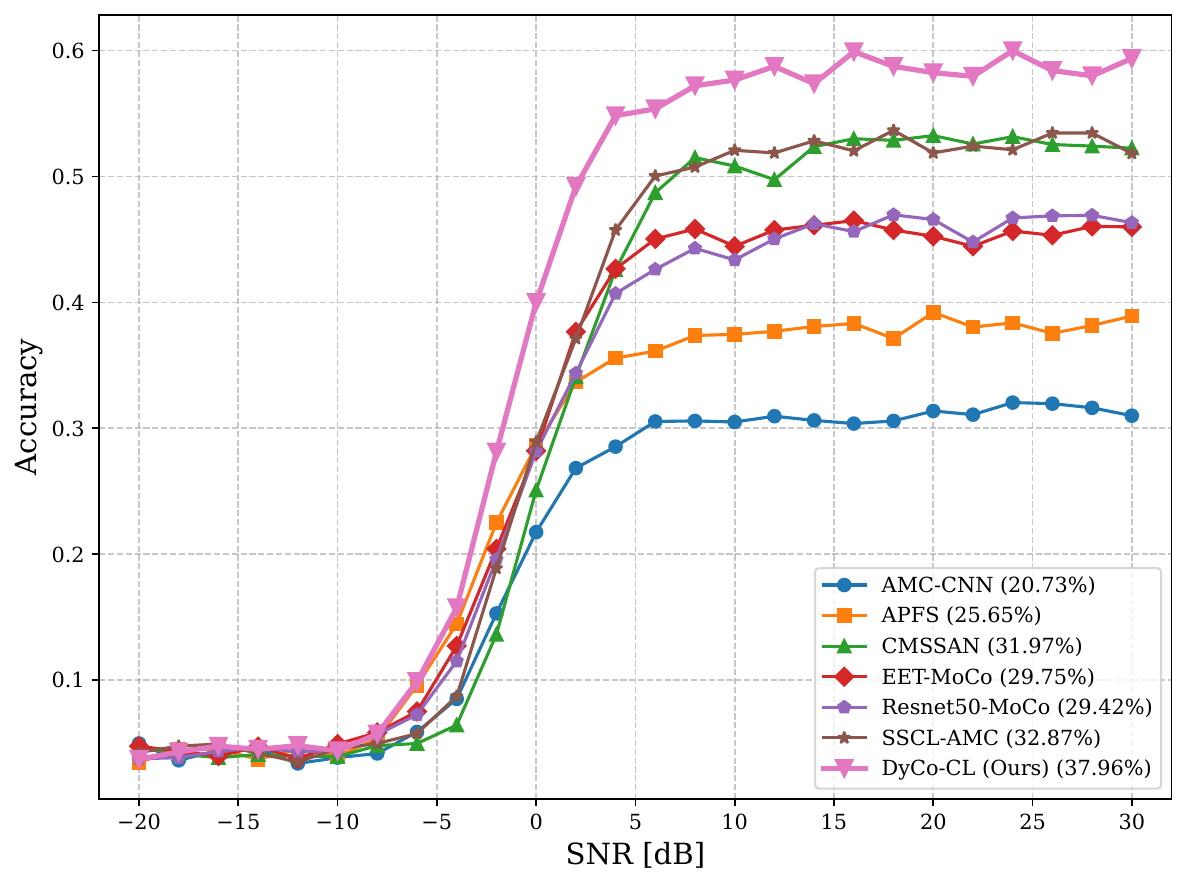}
\caption{Accuracy vs. SNR comparison on RML2018.01a (10-shot).}
\label{fig:acc_vs_snr_rml2018}
\end{figure}

\subsection{Ablation Studies}
We conduct subtractive ablations on RML2016.10a ($N=1$) to isolate the effects of the backbone, VAA, and fusion design.
The adaptive fusion is compared against simple concatenation baselines (Table~\ref{tab:ablation}).

\begin{table}[h]
\centering
\caption{Ablation study on RML2016.10a ($N=1$).}
\label{tab:ablation}
\resizebox{\linewidth}{!}{
    % 【修改点4】去掉了列定义中的 | 竖线
    \begin{tabular}{llcc} 
    \toprule
    \textbf{Category} & \textbf{Model Variant} & \textbf{Acc (\%)} & \textbf{$\Delta$} \\
    \midrule
    \textbf{Full Method} & \textbf{DyCo-CL} & \textbf{43.84} & - \\
    \midrule
    \multirow{2}{*}{\textit{\makecell[l]{Backbone \& \\ Module}}} 
     & w/o Dynamic-Consistency & 34.94 & $-8.90$ \\
     & w/o Swin (ResNet18) & 39.45 & $-4.39$ \\
    \midrule
    \multirow{3}{*}{\textit{\makecell[l]{Fusion \\ Strategy}}} 
     & Stage I $\to$ Concat & 39.47 & $-4.37$ \\
     & Stage II $\to$ Concat & 40.12 & $-3.72$ \\
     & All Stages $\to$ Concat & 38.32 & $-5.52$ \\
    \bottomrule
    \end{tabular}
}
\end{table}

\textbf{Analysis.} 
Table~\ref{tab:ablation} confirms the synergy of our components:
\textit{(1) Manifold Expansion.} Removing VAA causes the largest drop ($-8.90\%$), proving that adversarial exploration is essential to overcome the \textit{concentration of measure} and avoid trivial solution collapse.
\textit{(2) Structural Stability.} The Swin backbone outperforms ResNet18 ($-4.39\%$). Its block-diagonal Jacobian (Sec.~\ref{sec:theory_swin}) acts as a structural stabilizer, allowing the model to withstand high-energy VAA perturbations.
\textit{(3) Anchoring Semantics.} Simple concatenation fails ($-5.52\%$). Our adaptive fusion counteracts \textit{semantic drift} by explicitly anchoring the expanded manifold with physical priors, rather than just merging features.

\subsection{Efficiency Analysis}
DyCo-CL is highly efficient, achieving a practical sweet spot between robustness and edge-deployment constraints. Compared to the robust baseline SSCL-AMC, it reduces FLOPs by over 2.5$\times$ while simultaneously tripling inference throughput. Furthermore, with only 1.44M parameters ($\approx$5.8 MB), our model is over 16$\times$ smaller than standard architectures like ResNet50-MoCo, and its 0.60~ms latency meets the sub-millisecond demands of real-time 5G applications. A comprehensive analysis is provided in Appendix~\ref{app:complexity}.

\subsection{Qualitative Analysis}

\textbf{Confusion Matrix Analysis.} 
Fig.~\ref{fig:confusion_matrix} visualizes the classification behavior on the RML2016.10A dataset (10dB). DyCo-CL achieves distinct separation for phase-sensitive signals (e.g., 8PSK vs. QPSK), confirming the backbone's phase-preservation capability. Additional analysis on the RML2018.01A dataset is provided in the Appendix~\ref{app:comfusion}.

\begin{figure}[t]
    \centering
    % 只保留 2016 的图
    \includegraphics[width=0.85\linewidth]{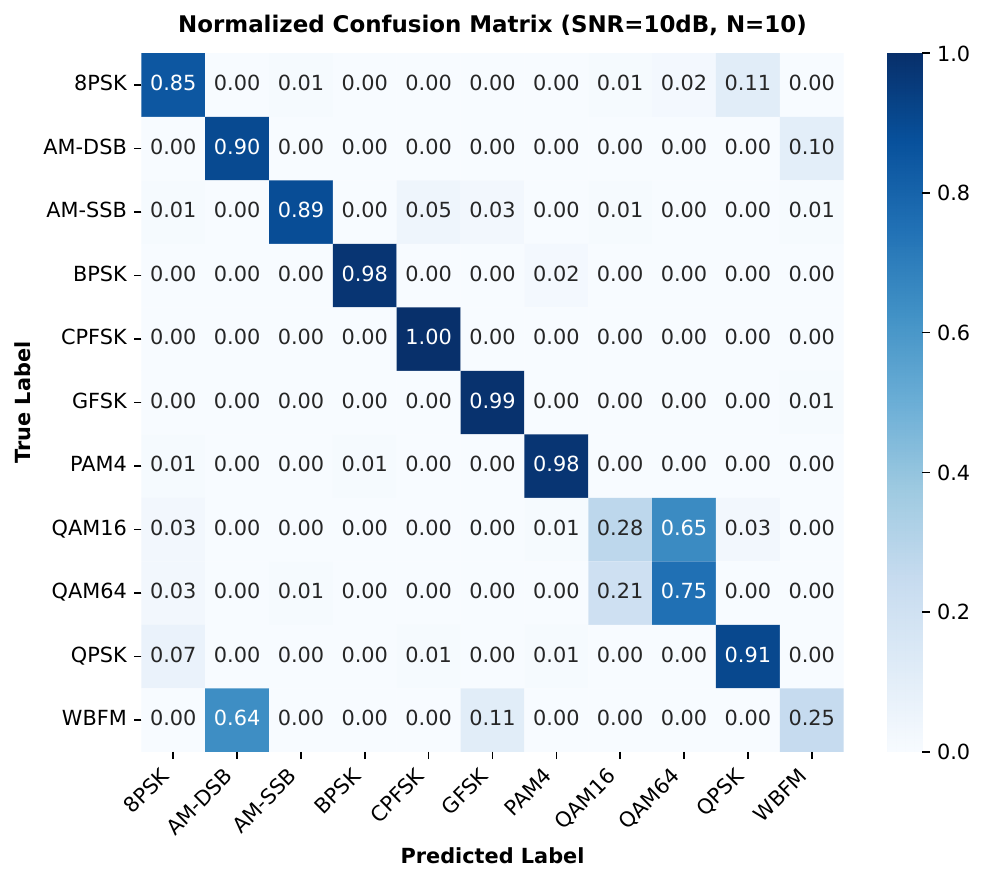} 
    \caption{\textit{Confusion Matrix ($N=10$, 10dB) on RML2016.10A.}}
    \label{fig:confusion_matrix}
\end{figure}

\textbf{Feature Visualization (t-SNE).}
Fig.~\ref{fig:tsne} visualizes feature distributions on RML2016.10A and RML2018.01A. DyCo-CL consistently achieves high \textit{intra-class compactness} and clear \textit{inter-class margins}, even with the increased complexity of the 2018 dataset. This validates that our spectral regularization effectively stabilizes the signal manifold across different data scales.

\begin{figure}[t]
    \centering
    % --- 左图 (RML2016.10A) ---
    \begin{minipage}{0.49\linewidth}
        \centering
        % 请确保此处引用的图片是那张 11 个类别的图 (2016)
        \includegraphics[width=\linewidth]{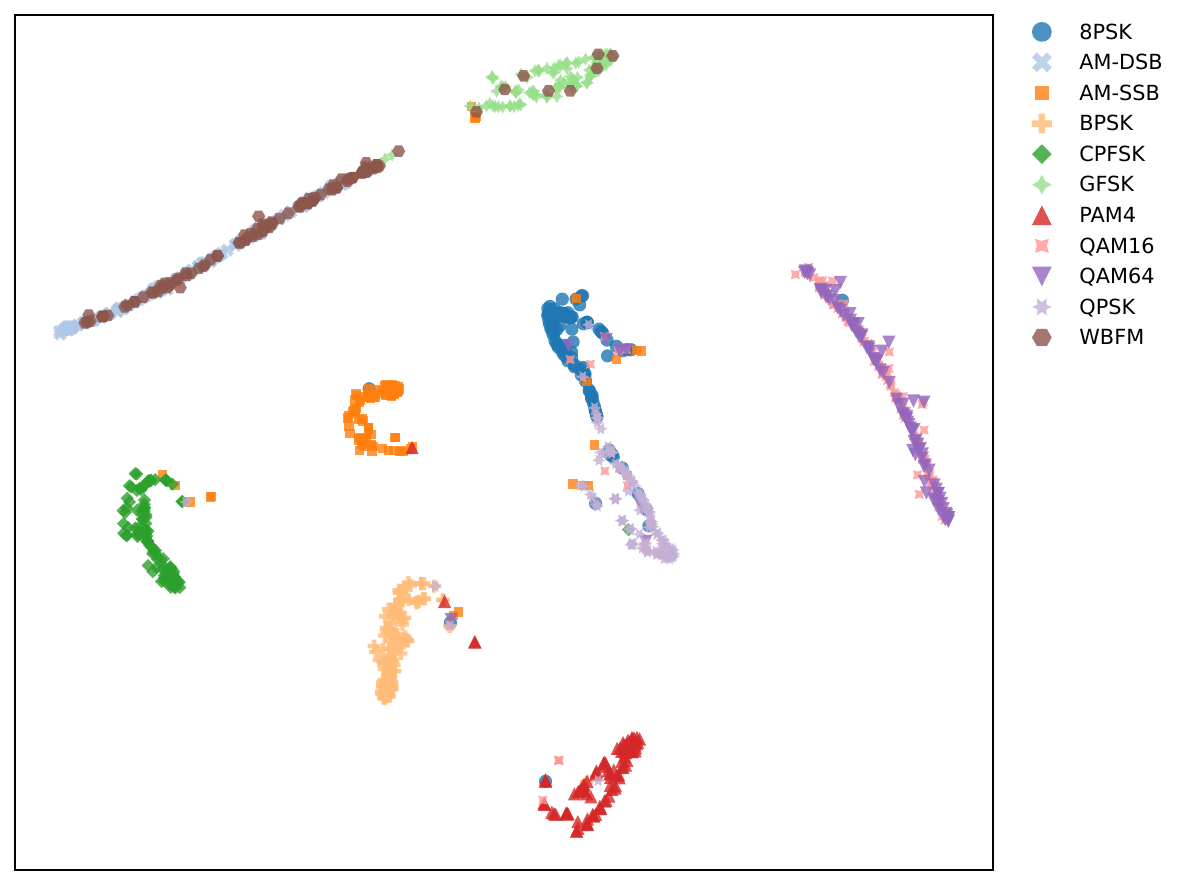} 
        \centerline{(a) RML2016.10A} 
    \end{minipage}
    \hfill 
    % --- 右图 (RML2018.01A) ---
    \begin{minipage}{0.50\linewidth}
        \centering
        % 请确保此处引用的图片是那张 24 个类别的图 (2018)
        \includegraphics[width=\linewidth]{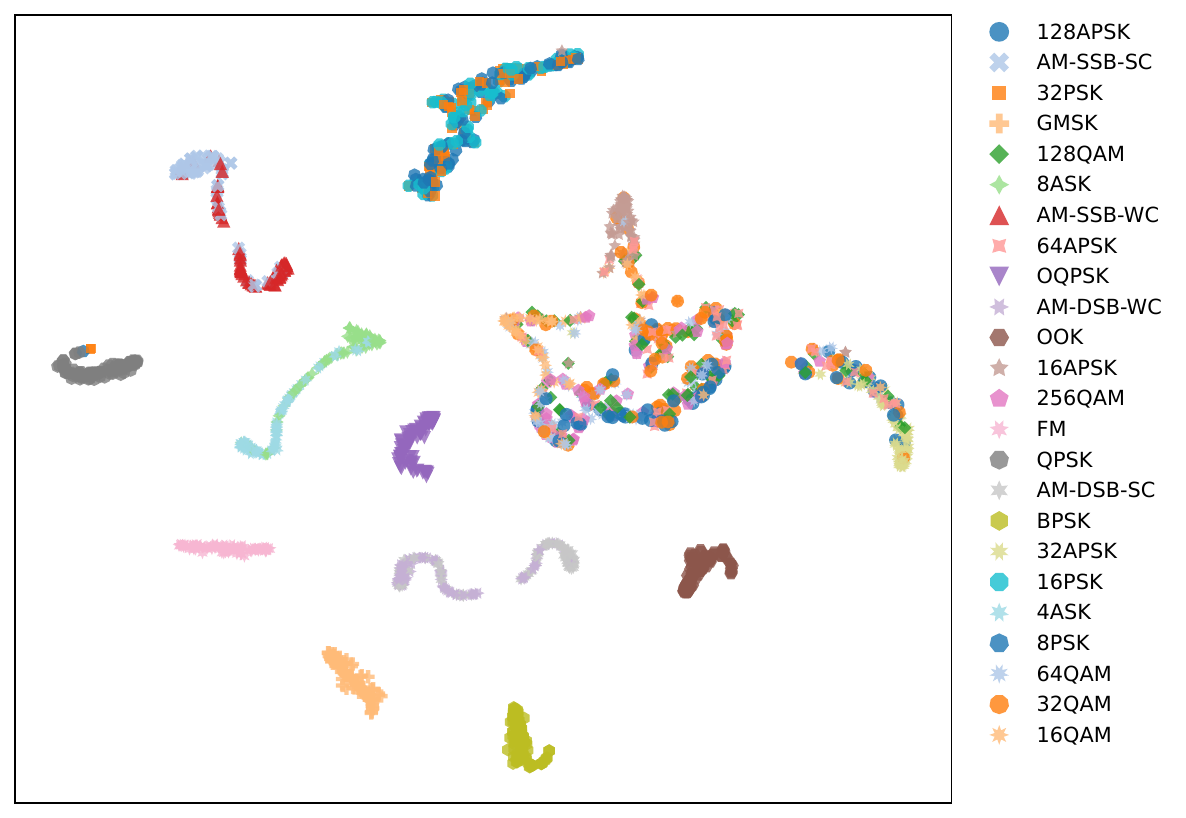} 
        \centerline{(b) RML2018.01A} 
    \end{minipage}
    
    \caption{\textit{t-SNE Visualization ($N=10$, 10dB).} Feature distributions on RML2016.10A (Left) and RML2018.01A (Right). }
    \label{fig:tsne}
\end{figure}

\section{Conclusion}

In this work, we address \textit{concentration of measure}, \textit{spectral instability}, and \textit{semantic drift} via DyCo-CL. This framework couples VAA with Implicit Spectral Regularization to enforce optimization stability, while our Signal-Adaptive Swin structurally bounds the Lipschitz constant. Anchored by Hybrid Fusion, DyCo-CL establishes a new few-shot SOTA. Future work will explore open-set recognition.

\section*{Impact Statement}

This paper presents work whose goal is to advance the field of Machine
Learning. There are many potential societal consequences of our work, none
which we feel must be specifically highlighted here.

% In the unusual situation where you want a paper to appear in the
% references without citing it in the main text, use \nocite
\nocite{langley00}

\bibliography{example_paper}
\bibliographystyle{icml2026}

%%%%%%%%%%%%%%%%%%%%%%%%%%%%%%%%%%%%%%%%%%%%%%%%%%%%%%%%%%%%%%%%%%%%%%%%%%%%%%%
%%%%%%%%%%%%%%%%%%%%%%%%%%%%%%%%%%%%%%%%%%%%%%%%%%%%%%%%%%%%%%%%%%%%%%%%%%%%%%%
% APPENDIX
%%%%%%%%%%%%%%%%%%%%%%%%%%%%%%%%%%%%%%%%%%%%%%%%%%%%%%%%%%%%%%%%%%%%%%%%%%%%%%%
%%%%%%%%%%%%%%%%%%%%%%%%%%%%%%%%%%%%%%%%%%%%%%%%%%%%%%%%%%%%%%%%%%%%%%%%%%%%%%%
\newpage
\appendix
\onecolumn

\section{Theoretical Analysis and Proofs}
\label{app:theory}

In this section, we provide rigorous mathematical derivations to support the geometric motivations presented in Section~\ref{sec:problem_formulation} and the optimality of the VAA strategy in Section~\ref{sec:method_vaa}.

\subsection{Proof of the Orthogonality in High Dimensions}
\label{app:proof_orthogonality}

\textbf{Proposition 1 (Asymptotic Orthogonality).} \textit{Let $\mathbf{v} \in \mathbb{R}^D$ be a fixed unit vector and $\mathbf{r} \sim \mathcal{N}(0, \sigma^2 \mathbf{I}_D)$ be an isotropic random perturbation. As $D \to \infty$, $\mathbf{r}$ becomes orthogonal to $\mathbf{v}$ almost surely.}

\begin{proof}
Due to the rotational invariance of the isotropic Gaussian distribution, the projection of $\mathbf{r}$ onto any fixed unit vector $\mathbf{v}$ follows a univariate Gaussian distribution. Let $z = \mathbf{r}^\top \mathbf{v}$. Then:
\begin{equation}
    z \sim \mathcal{N}(0, \sigma^2).
\end{equation}

The squared norm $\|\mathbf{r}\|^2$ follows a scaled Chi-square distribution, i.e., $\|\mathbf{r}\|^2 / \sigma^2 \sim \chi^2_D$. According to the concentration properties of Chi-square variables, for large $D$, the norm concentrates sharply around its mean:
\begin{equation}
    \|\mathbf{r}\| \approx \sigma \sqrt{D}.
\end{equation}
More formally, for any $\eta > 0$, the probability of deviation decays exponentially: $P(|\|\mathbf{r}\| - \sigma\sqrt{D}| \ge \eta \sigma\sqrt{D}) \le 2e^{-cD\eta^2}$. Thus, we can approximate the denominator $\|\mathbf{r}\|$ by the deterministic value $\sigma\sqrt{D}$ with high probability.

The cosine similarity $\cos \theta$ between the perturbation $\mathbf{r}$ and the direction $\mathbf{v}$ is given by:
\begin{equation}
    \cos \theta = \frac{\mathbf{r}^\top \mathbf{v}}{\|\mathbf{r}\|} = \frac{z}{\|\mathbf{r}\|}.
\end{equation}
Substituting the concentration approximation $\|\mathbf{r}\| \approx \sigma \sqrt{D}$:
\begin{equation}
    \cos \theta \approx \frac{z}{\sigma \sqrt{D}}.
\end{equation}
Since $z \sim \mathcal{N}(0, \sigma^2)$, the normalized variable $z/\sigma$ follows a standard normal distribution $\mathcal{N}(0, 1)$. Let $\delta > 0$ be a small angular tolerance. We apply the standard Gaussian tail bound $P(|X| \ge t) \le 2\exp(-t^2/2)$ with $t = \delta \sqrt{D}$:

\begin{equation}
    P(|\cos \theta| \ge \delta) \approx P\left( \left| \frac{z}{\sigma} \right| \ge \delta \sqrt{D} \right) \le 2 \exp\left(-\frac{D \delta^2}{2}\right).
\end{equation}

As $D \to \infty$, the exponent $-\frac{D \delta^2}{2} \to -\infty$, and thus $P(|\cos \theta| \ge \delta) \to 0$. This proves that the perturbation $\mathbf{r}$ is orthogonal to the sensitive direction $\mathbf{v}$ almost surely.
\end{proof}

% 【关键修改点：新增 Remark 用于回击关于 D=256 的质疑】
\textbf{Remark (Finite Dimension Analysis).} 
While Proposition 1 establishes asymptotic orthogonality, we verify its validity for the specific dimension of AMR signals ($D=256$). Typically, a perturbation is considered effective if it has a non-negligible projection on the gradient, e.g., $|\cos \theta| \ge 0.2$ (corresponding to an angle $\le 78.5^\circ$). Substituting $D=256$ and $\delta=0.2$ into Eq.~\eqref{eq:orthogonality_bound}:
\begin{equation}
    P(|\cos \theta| \ge 0.2) \le 2 \exp\left( - \frac{256 \times 0.2^2}{2} \right) = 2 e^{-5.12} \approx 0.012.
\end{equation}
This indicates that even in finite dimensions, over $98.8\%$ of isotropic random perturbations are effectively orthogonal to the sensitive direction. Thus, the concentration of measure phenomenon remains the dominant geometric constraint in our setting.

\subsection{Optimality of Virtual Adversarial Augmentation}
\label{app:proof_vaa}

Here we derive why the VAA update rule targets the most sensitive direction of the model.

\textbf{Problem Setup.} We seek a perturbation $\mathbf{r}$ with $\|\mathbf{r}\| \le \epsilon$ that maximizes the KL divergence between the output distributions of the clean input $\mathbf{x}$ and the perturbed input $\mathbf{x}+\mathbf{r}$:
\begin{equation}
    \mathbf{r}^* = \mathop{\arg\max}_{\|\mathbf{r}\| \le \epsilon} \mathcal{J}(\mathbf{r}), \quad \text{where } \mathcal{J}(\mathbf{r}) = \mathrm{KL}\!\left[P(\cdot|\mathbf{x}) \,\|\, P(\cdot|\mathbf{x}+\mathbf{r})\right].
\end{equation}

\textbf{Taylor Expansion.} Since $\mathcal{J}(\mathbf{0}) = 0$ (divergence with itself is zero) and $\mathcal{J}(\mathbf{r})$ is minimized at $\mathbf{r}=\mathbf{0}$, the first-order gradient $\nabla_{\mathbf{r}} \mathcal{J}(\mathbf{0})$ is also $\mathbf{0}$. We perform a second-order Taylor expansion around $\mathbf{r}=\mathbf{0}$:
\begin{equation}
    \mathcal{J}(\mathbf{r}) \approx \mathcal{J}(\mathbf{0}) + \nabla_{\mathbf{r}} \mathcal{J}(\mathbf{0})^\top \mathbf{r} + \frac{1}{2} \mathbf{r}^\top \mathbf{H}(\mathbf{x}) \mathbf{r} = \frac{1}{2} \mathbf{r}^\top \mathbf{H}(\mathbf{x}) \mathbf{r},
\end{equation}
where $\mathbf{H}(\mathbf{x}) = \nabla^2_{\mathbf{r}} \mathcal{J}(\mathbf{r})|_{\mathbf{r}=0}$ is the Hessian matrix of the KL divergence with respect to the input.

\textbf{Eigenvector Alignment.}
The optimization problem simplifies to maximizing the quadratic form:
\begin{equation}
    \mathbf{r}^* \approx \mathop{\arg\max}_{\|\mathbf{r}\| \le \epsilon} \frac{1}{2} \mathbf{r}^\top \mathbf{H}(\mathbf{x}) \mathbf{r}.
\end{equation}
From linear algebra, the vector $\mathbf{r}$ that maximizes $\mathbf{r}^\top \mathbf{H} \mathbf{r}$ subject to a norm constraint is the \textbf{dominant eigenvector} (the eigenvector corresponding to the largest eigenvalue) of the Hessian $\mathbf{H}$. Let $\mathbf{u}_1$ be this unit eigenvector. Then:
\begin{equation}
    \mathbf{r}^* = \epsilon \mathbf{u}_1.
\end{equation}

\textbf{Power Iteration Approximation.}
Computing the full Hessian $\mathbf{H}$ is computationally expensive ($\mathcal{O}(D^2)$). The Power Iteration method finds $\mathbf{u}_1$ by iteratively computing $\mathbf{d}_{t+1} \leftarrow \mathbf{H}\mathbf{d}_t$.
In our VAA implementation, we approximate the Hessian-vector product $\mathbf{H}\mathbf{d}$ using the finite difference of gradients:

In practice, the Hessian-vector product $\mathbf{H}\mathbf{d}$ is approximated using a finite-difference scheme with a small scalar $\xi > 0$:
\begin{equation}
\mathbf{H}\mathbf{d} \approx 
\frac{\nabla_{\mathbf{r}}\mathcal{J}(\mathbf{r})\big|_{\mathbf{r}=\xi \mathbf{d}} - 
\nabla_{\mathbf{r}}\mathcal{J}(\mathbf{r})\big|_{\mathbf{r}=\mathbf{0}}}{\xi}
=
\frac{\nabla_{\mathbf{r}}\mathcal{J}(\mathbf{r})\big|_{\mathbf{r}=\xi \mathbf{d}}}{\xi},
\end{equation}
where the second equality follows from $\nabla_{\mathbf{r}}\mathcal{J}(\mathbf{0})=\mathbf{0}$.

By performing one step of this approximation, we effectively align the perturbation $\mathbf{r}$ with the dominant eigenvector of the local curvature, thereby targeting the direction where the model is most sensitive.

\section{Details of Standard Physical Augmentations}
\label{app:aug_details}

In the standard augmentation branch (generating $\mathbf{x}_{weak}$), we apply a set of domain-specific transformations $\mathcal{T}_{std}$ to the complex baseband signal $s[n] = I[n] + jQ[n]$. Each transformation is applied sequentially with a probability of $p=0.5$. The specific formulations are as follows:

\begin{itemize}
    \item \textbf{Random Rotation:} To simulate phase ambiguity caused by lack of synchronization, we rotate the I/Q constellation by a random angle $\theta$:
    \begin{equation}
        s'[n] = s[n] \cdot e^{j\theta}, \quad \theta \sim \mathcal{U}(0, \pi).
    \end{equation}

    \item \textbf{I/Q Flip:} To model spectrum inversion or hardware polarity mismatches, we randomly invert the sign of the in-phase or quadrature components:
    \begin{equation}
        s'[n] = \begin{cases} 
        -\text{Re}(s[n]) + j\text{Im}(s[n]) & \text{flip I} \\
        \text{Re}(s[n]) - j\text{Im}(s[n]) & \text{flip Q} \\
        -s[n] & \text{flip Both}
        \end{cases}
    \end{equation}

    \item \textbf{Time Shift:} To simulate temporal synchronization errors, we cyclically shift the signal sequence by an integer delay $\delta$:
    \begin{equation}
        s'[n] = s[(n - \delta) \pmod L], \quad \delta \sim \mathcal{U}\{1, L/16\}.
    \end{equation}

    \item \textbf{AWGN Injection:} To enhance robustness against varying Signal-to-Noise Ratios (SNR), we inject complex additive white Gaussian noise:
    \begin{equation}
        s'[n] = s[n] + w[n], \quad w[n] \sim \mathcal{CN}(0, \sigma^2),
    \end{equation}
    where the noise standard deviation $\sigma$ is uniformly sampled from $\mathcal{U}(0.01, 0.04)$.

    \item \textbf{Frequency Offset:} To simulate Carrier Frequency Offset (CFO) due to oscillator mismatch, we apply a linear phase progression:
    \begin{equation}
        s'[n] = s[n] \cdot e^{j 2\pi \Delta f \frac{n}{L}}, \quad \Delta f \sim \mathcal{U}(-1, 1).
    \end{equation}

    \item \textbf{Amplitude Scaling:} To simulate channel fading or gain control variations, we scale the signal magnitude by a random factor $\alpha$:
    \begin{equation}
        s'[n] = \alpha \cdot s[n], \quad \alpha \sim \mathcal{U}(0.8, 1.2).
    \end{equation}
\end{itemize}

Finally, the augmented complex sequence $s'[n]$ is converted back to the real-valued matrix form $\mathbf{x}_{weak} \in \mathbb{R}^{2 \times L}$ for model input.

\section{Details of Physical Prior Extraction}
\label{app:prior_details}

In this section, we provide the detailed mathematical formulations for the expert features used in the Hierarchical Hybrid Knowledge Fusion module.

\subsection{Fourth-Order Cycle Spectrum ($\mathbf{P}_{4}$)}
To capture cyclostationary signatures hidden in noise, we compute the magnitude spectrum of the fourth-power signal. Let $x[n]$ be the complex baseband signal of length $L$. We first compute the fourth power $x^4[n]$ and then apply the Discrete Fourier Transform (DFT):
\begin{equation}
    X_4[k] = \sum_{n=0}^{L-1} (x[n])^4 e^{-j \frac{2\pi}{L} k n}.
\end{equation}
The normalized feature vector $\mathbf{P}_{4} \in \mathbb{R}^L$ is obtained by:
\begin{equation}
    \mathbf{P}_{4}[k] = \frac{|X_4[k]|}{\max_k |X_4[k]| + \epsilon}.
\end{equation}
This feature amplifies phase symmetries inherent in high-order constellations (e.g., QAM, PSK).

\subsection{PSD-Regularized Envelope ($\mathbf{E}_{reg}$)}
To characterize amplitude stability robust to SNR variations, we derive a normalized envelope feature.
First, we compute the instantaneous amplitude $r[n] = \sqrt{I[n]^2 + Q[n]^2}$.
We then compute the zero-centered normalized envelope $r_{cn}[n]$:
\begin{equation}
    r_{cn}[n] = \frac{r[n] - \bar{r}}{\max |r[n] - \bar{r}| + \epsilon},
\end{equation}
where $\bar{r}$ is the mean amplitude.
Next, we compute the peak Power Spectral Density ($\gamma_{max}$) of $r_{cn}$:
\begin{equation}
    \gamma_{max} = \frac{1}{L} \max_k \left| \sum_{n=0}^{L-1} r_{cn}[n] e^{-j \frac{2\pi}{L} k n} \right|^2.
\end{equation}
Finally, the regularized envelope feature $\mathbf{E}_{reg} \in \mathbb{R}^L$ is derived by normalizing the original amplitude by $\gamma_{max}$:
\begin{equation}
    \mathbf{E}_{reg}[n] = \text{Norm}\left( \frac{r[n]}{\gamma_{max} + \epsilon} \right).
\end{equation}

\subsection{Gramian Angular Fields (GAF)}
In the Spatial Stream, we transform the 1D features into 2D manifolds. Given a normalized time series $\mathbf{x} = \{x_1, \dots, x_L\}$, we first rescale it to $[-1, 1]$ and convert it to polar coordinates via $\phi_i = \arccos(x_i)$. The Gramian Angular Sum Field (GASF) and Difference Field (GADF) are defined as:
\begin{equation}
    \text{GASF}_{ij} = \cos(\phi_i + \phi_j), \quad \text{GADF}_{ij} = \sin(\phi_i - \phi_j).
\end{equation}
These 2D maps preserve temporal correlations in a spatial structure suitable for CNN processing.

\section{Spectral Stability of Signal-Adaptive Swin}
\label{app:swin_proof}

We provide the formal proof for Proposition 1, demonstrating that the proposed fixed-window attention mechanism structurally bounds the Lipschitz constant.

\textbf{Definition (Local Lipschitz Constant).} 
For a function $f: \mathcal{X} \to \mathcal{Y}$, the local Lipschitz constant at input $\mathbf{x}$ is defined as the spectral norm of its Jacobian matrix $\mathbf{J}_f(\mathbf{x}) = \partial f(\mathbf{x}) / \partial \mathbf{x}$. We denote this as:
\begin{equation}
    K_f(\mathbf{x}) \triangleq \|\mathbf{J}_f(\mathbf{x})\|_2 = \sigma_{max}(\mathbf{J}_f(\mathbf{x})),
\end{equation}
where $\sigma_{max}(\cdot)$ denotes the largest singular value.

\textbf{Proposition 1 (Structural Boundedness).} 
\textit{Let $f_{global}$ be a standard global self-attention layer, and $f_{fixed}$ be the proposed fixed-window attention layer with window size $M$. The Lipschitz constant of $f_{fixed}$ is strictly bounded by a constant $C_M$ dependent only on $M$, whereas $f_{global}$ is unbounded with respect to sequence length $L$.}

\begin{proof}
Let $\mathbf{X} \in \mathbb{R}^{L \times D}$ be the input sequence (which can be vectorized as $\mathbf{x} \in \mathbb{R}^{LD}$).

\textbf{1. Instability of Global Attention.}
For standard dot-product attention, Kim et al.~\cite{18} proved that the Jacobian $\mathbf{J}_{global}$ is a dense matrix, and its spectral norm scales with the sequence length:
\begin{equation}
    \sup_{\mathbf{X}} \sigma_{max}(\mathbf{J}_{global}(\mathbf{X})) = \mathcal{O}(\sqrt{L}).
\end{equation}
As $L \to \infty$, the Lipschitz constant diverges, causing spectral instability.

\textbf{2. Stability of Fixed-Window Attention.}
In our backbone, $\mathbf{X}$ is partitioned into $N_w = L/M$ non-overlapping windows $\{\mathbf{W}_k\}$. The function $f_{fixed}$ operates independently on each window. Consequently, the Jacobian $\mathbf{J}_{fixed}$ is strictly block-diagonal:
\begin{equation}
    \mathbf{J}_{fixed} = \text{diag}(\mathbf{J}_1, \dots, \mathbf{J}_{N_w}),
\end{equation}
where $\mathbf{J}_k$ is the local Jacobian for the $k$-th window.

\textbf{3. Derivation of the Bound.}
The largest singular value of a block-diagonal matrix is the maximum of the singular values of its blocks:
\begin{equation}
    \sigma_{max}(\mathbf{J}_{fixed}) = \max_{k} \sigma_{max}(\mathbf{J}_k).
\end{equation}
Let $C_M = \sup_{\mathbf{W}} \sigma_{max}(\mathbf{J}_{\phi}(\mathbf{W}))$ be the Lipschitz constant of the local window attention. Since $M$ is a fixed constant (e.g., $M=16$) and $M \ll L$, $C_M$ is independent of $L$. Thus:
\begin{equation}
    K_{f_{fixed}}(\mathbf{X}) = \sigma_{max}(\mathbf{J}_{fixed}) \le C_M < \infty.
\end{equation}
This proves that the Lipschitz constant is structurally bounded, ensuring geometric stability.
\end{proof}

\section{Detailed Proof of Implicit Spectral Regularization}
\label{app:theory_proof}

In this section, we prove that the Dynamic-Consistency objective functions as a spectral regularizer. We utilize the notation $K_f(\mathbf{x}) = \sigma_{max}(\mathbf{J}_f(\mathbf{x}))$ defined in Appendix~\ref{app:swin_proof}.

\subsection{Problem Setup and Theoretical Equivalences}
\label{app:vaa_equivalence}
The Semantic Consistency (SC) loss implemented in our algorithm minimizes the cosine distance between the anchor $\mathbf{z} = f_\theta(\mathbf{x})$ and the adversarial view $\mathbf{z}_{adv} = f_\theta(\mathbf{x}+\mathbf{r})$. 

First, we establish the strict equivalence between this implemented loss and the Euclidean distance used for theoretical analysis. Since the contrastive representations are projected onto the unit hypersphere, we have $\|\mathbf{z}\|_2 = \|\mathbf{z}_{adv}\|_2 = 1$. Expanding the squared Euclidean distance:

\begin{align}
    \frac{1}{2} \| \mathbf{z} - \mathbf{z}_{adv} \|_2^2 &= \frac{1}{2} (\mathbf{z} - \mathbf{z}_{adv})^\top (\mathbf{z} - \mathbf{z}_{adv}) \\
    &= \frac{1}{2} ( \underbrace{\|\mathbf{z}\|_2^2}_{1} - 2\mathbf{z}^\top \mathbf{z}_{adv} + \underbrace{\|\mathbf{z}_{adv}\|_2^2}_{1} ) \\
    &= 1 - \mathbf{z}^\top \mathbf{z}_{adv} \\
    &= \mathcal{L}_{SC}.
\end{align}

This derivation proves that minimizing the implemented Cosine loss is mathematically identical to minimizing the Euclidean displacement. Consequently, without loss of generality, we can formulate the optimization problem using the Euclidean norm to leverage its spectral properties for analysis.

Based on the loss equivalence, we analyze the following surrogate objective:
\begin{equation}
    \min_\theta \mathbb{E}_{\mathbf{x}} \left[ \max_{\|\mathbf{r}\| \le \epsilon} \frac{1}{2} \| f_\theta(\mathbf{x}) - f_\theta(\mathbf{x}+\mathbf{r}) \|_2^2 \right]. 
    \label{eq:app_objective}
\end{equation}

Next, we formally justify that the inner maximization of Eq.~\eqref{eq:app_objective}, which maximizes $L_2$ displacement, is asymptotically equivalent to the VAA objective used in our algorithm.

The VAA objective is to find a perturbation $\mathbf{r}_{adv}$ that maximizes the KL-divergence of the output distributions:
\begin{equation}
    \mathbf{r}_{adv} = \arg \max_{\|\mathbf{r}\|_2 \le \epsilon} D_{KL}(P_\theta(\mathbf{x}) \,\|\, P_\theta(\mathbf{x}+\mathbf{r})).
\end{equation}
For a small perturbation $\mathbf{r}$, the KL-divergence can be approximated by its second-order Taylor expansion:
\begin{equation}
    D_{KL}(P_\theta(\mathbf{x}) \,\|\, P_\theta(\mathbf{x}+\mathbf{r})) \approx \frac{1}{2} \mathbf{r}^\top \mathbf{F}(\mathbf{x}) \mathbf{r},
\end{equation}
where $\mathbf{F}(\mathbf{x})$ is the Fisher Information Matrix (FIM) with respect to the input $\mathbf{x}$.

A key property connecting the FIM to the representation geometry is its relationship with the Jacobian of the feature map, $\mathbf{J}_\theta(\mathbf{x}) = \nabla_{\mathbf{x}} f_\theta(\mathbf{x})$. For distributions where the representation $f_\theta(\mathbf{x})$ acts as the natural parameter (a common setup in contrastive learning), the FIM is directly proportional to the Gram matrix of the Jacobian's pushforward map \cite{28}. This leads to the approximation:
\begin{equation}
    \mathbf{r}^\top \mathbf{F}(\mathbf{x}) \mathbf{r} \propto \mathbf{r}^\top (\mathbf{J}_\theta(\mathbf{x})^\top \mathbf{J}_\theta(\mathbf{x})) \mathbf{r} = \| \mathbf{J}_\theta(\mathbf{x}) \mathbf{r} \|_2^2.
\end{equation}
Meanwhile, the objective analyzed in our main theoretical track is the maximization of the squared $L_2$ displacement. Using a first-order Taylor expansion, this is:
\begin{equation}
    \max_{\|\mathbf{r}\|_2 \le \epsilon} \| f_\theta(\mathbf{x}+\mathbf{r}) - f_\theta(\mathbf{x}) \|_2^2 \approx \max_{\|\mathbf{r}\|_2 \le \epsilon} \| \mathbf{J}_\theta(\mathbf{x}) \mathbf{r} \|_2^2.
\end{equation}
Comparing the two maximization problems, we see that both are approximately equivalent to finding the perturbation $\mathbf{r}$ that maximizes the Rayleigh quotient $\frac{\mathbf{r}^\top \mathbf{J}_\theta(\mathbf{x})^\top \mathbf{J}_\theta(\mathbf{x}) \mathbf{r}}{\mathbf{r}^\top \mathbf{r}}$. The solution to this is the dominant eigenvector of $\mathbf{J}_\theta(\mathbf{x})^\top \mathbf{J}_\theta(\mathbf{x})$, which corresponds to the direction of the largest singular value of the Jacobian $\mathbf{J}_\theta(\mathbf{x})$.

Therefore, the adversarial direction found by VAA is asymptotically the same as the one that maximally displaces the feature representation in Euclidean space. This formally validates the analysis of the surrogate objective in Eq.~\eqref{eq:app_objective}.

\subsection{Step 1: Local Linearization}
\label{app:spectral_deriv}
Using a first-order Taylor expansion around $\mathbf{x}$\footnote{We assume the encoder $f_\theta$ is locally linear within the $\epsilon$-ball. While deep networks are globally non-linear, this first-order approximation is standard in adversarial training literature~\cite{17} to provide tractable geometric insights.}:

\begin{equation}
    f_\theta(\mathbf{x}+\mathbf{r}) \approx f_\theta(\mathbf{x}) + \mathbf{J}_\theta(\mathbf{x}) \mathbf{r}.
\end{equation}
The objective function approximates to:
\begin{equation}
    \| f_\theta(\mathbf{x}) - f_\theta(\mathbf{x}+\mathbf{r}) \|_2^2 \approx \| \mathbf{J}_\theta(\mathbf{x}) \mathbf{r} \|_2^2 = \mathbf{r}^\top \mathbf{J}_\theta(\mathbf{x})^\top \mathbf{J}_\theta(\mathbf{x}) \mathbf{r}.
\end{equation}

\subsection{Step 2: Solving the Inner Maximization}
The inner loop seeks the perturbation $\mathbf{r}^*$ that maximizes this quadratic form under $\|\mathbf{r}\|_2 \le \epsilon$. This is a Rayleigh quotient problem. The maximum value is determined by the largest eigenvalue of $\mathbf{J}_\theta(\mathbf{x})^\top \mathbf{J}_\theta(\mathbf{x})$:
\begin{equation}
    \max_{\|\mathbf{r}\| \le \epsilon} \| \mathbf{J}_\theta(\mathbf{x}) \mathbf{r} \|_2^2 = \epsilon^2 \lambda_{max}(\mathbf{J}_\theta(\mathbf{x})^\top \mathbf{J}_\theta(\mathbf{x})).
\end{equation}
By definition, $\sqrt{\lambda_{max}(\mathbf{A}^\top \mathbf{A})} = \sigma_{max}(\mathbf{A})$. Thus:
\begin{equation}
    \max_{\|\mathbf{r}\| \le \epsilon} \| \mathbf{J}_\theta(\mathbf{x}) \mathbf{r} \|_2^2 = \epsilon^2 \sigma_{max}^2(\mathbf{J}_\theta(\mathbf{x})).
\end{equation}

\subsection{Step 3: Equivalence to Lipschitz Regularization}
Substituting back into the outer minimization:
\begin{equation}
    \min_\theta \mathbb{E}_{\mathbf{x}} \left[  \epsilon^2 \sigma_{max}^2(\mathbf{J}_\theta(\mathbf{x})) \right] \propto \min_\theta \mathbb{E}_{\mathbf{x}} [ K_f(\mathbf{x})^2 ].
\end{equation}
This confirms that DyCo-CL explicitly minimizes the local Lipschitz constant.

\subsection{Step 4: Link to Generalization Bound}
\label{app:gen_bound}
Following Sokolic et al.~\cite{19}, the generalization error $\mathcal{E}_{gen}$ is bounded by the spectral norm of the Jacobian:
\begin{equation}
    \mathcal{E}_{gen} \le \hat{\mathcal{E}} 
    + \mathcal{O}\left(
    \frac{\mathbb{E}_{\mathbf{x}\sim\mathcal{D}}
    \left[\sigma_{max}(\mathbf{J}_\theta(\mathbf{x}))\right]}{\sqrt{N}}
    \right).
\end{equation}
In few-shot regimes (small $N$), minimizing $\sigma_{max}(\mathbf{J}_\theta(\mathbf{x}))$ (via DyCo-CL) is critical for reducing the generalization gap.

\section{Hyperparameter Sensitivity Analysis}
\label{app:sensitivity}

In this section, we analyze the sensitivity of DyCo-CL to four critical hyperparameters: the semantic consistency weight ($\lambda_{sc}$), the VAA perturbation radius ($\epsilon$), the Swin Transformer window size ($M$), and the power iteration steps ($I_{iter}$). All experiments are conducted on RML2016.10a under the 1-shot setting.

% ==================== 第一组表格：Loss 相关 ====================
\begin{table}[h]
    \centering
    \caption{\textbf{Sensitivity to Loss Hyperparameters.} Impact of consistency weight ($\lambda_{sc}$) and perturbation radius ($\epsilon$).}
    \label{tab:sensitivity_loss}
    
    \vspace{0.1in}
    
    % (a) Lambda
    \textbf{(a) Impact of Consistency Weight ($\lambda_{sc}$)} \\
    \resizebox{\linewidth}{!}{
    \begin{tabular}{@{}lcccccccccc@{}}
    \toprule
    Value & 0.0 & 0.1  & 0.2 & 0.3 & 0.4 & 0.5 & \textbf{0.6 (Default)}  & 0.7& 0.8  & 0.9 \\
    \midrule
    Acc (\%) & 38.38 & 38.43 & 39.30& 41.58& 41.96& 41.78 & \textbf{43.84} & 41.04 & 42.44 & 40.91 \\
    $\Delta$ & \textcolor{red}{-5.46} & \textcolor{red}{-5.41} & \textcolor{red}{-4.54} &\textcolor{red}{-2.26} &\textcolor{red}{-1.88} &\textcolor{red}{-2.06} & - & \textcolor{red}{-2.8} & \textcolor{red}{-1.4}& \textcolor{red}{-2.93} \\
    \bottomrule
    \end{tabular}
    }
    
    \vspace{0.2in}
    
    % (b) Epsilon
    \textbf{(b) Impact of VAA Perturbation Radius ($\epsilon$)} \\
    \begin{tabular}{@{}lccccc@{}}
    \toprule
    Value & 0.1 & 0.2 & \textbf{0.3 (Default)} & 0.4 & 0.5 \\
    \midrule
    Acc (\%) & 43.70 & 43.05 & \textbf{43.84} & 41.34 & 40.95 \\
    $\Delta$ & \textcolor{red}{-0.14} & \textcolor{red}{-0.79} & - & \textcolor{red}{-2.50} & \textcolor{red}{-2.89} \\
    \bottomrule
    \end{tabular}
\end{table}

% ==================== 第二组表格：架构与计算相关 ====================
\begin{table}[h]
    \centering
    \caption{\textbf{Sensitivity to Architecture and Compute.} Impact of window size ($M$) and power iteration steps ($I_{iter}$).}
    \label{tab:sensitivity_arch}
    
    \vspace{0.1in}
    
    % (c) Window Size
    \textbf{(c) Impact of Swin Window Size ($M$)} \\
    \begin{tabular}{@{}lccccc@{}}
    \toprule
    Value & 1& 2& 4& \textbf{8 (Default)} & 16   \\
    \midrule
    Acc (\%) & 39.29 &39.15 & 38.10& \textbf{43.84} &  42.63 \\
    $\Delta$ & \textcolor{red}{-4.55}& \textcolor{red}{-4.69}& \textcolor{red}{-5.74}& - & \textcolor{red}{-1.21}  \\
    \bottomrule
    \end{tabular}
    
    \vspace{0.2in}
    
    % (d) IP Steps
    \textbf{(d) Impact of Power Iteration Steps ($I_{iter}$)} \\
    \begin{tabular}{@{}lccc@{}}
    \toprule
    Iterations & \textbf{1 (Default)} & 2 & 5 \\
    \midrule
    Acc (\%) & \textbf{43.84} & 41.71 & 40.82 \\ 
    Time & \textbf{1.0$\times$} & 1.19$\times$ & 1.38$\times$ \\
    $\Delta$ & - & \textcolor{red}{-2.13} & \textcolor{red}{-3.02} \\
    \bottomrule
    \end{tabular}
\end{table}

\textbf{Remark on Window Size.} 
We observe an optimal trade-off at $M=8$.
\begin{itemize}
    \item \textbf{Too Large ($M=16$):} Setting $M=16$ (equal to feature resolution) degenerates into global attention. This degrades performance by 1.21\% compared to $M=8$, validating our claim that structural locality is needed for geometric stability.
    \item \textbf{Too Small ($M < 8$):} Conversely, overly narrow windows severely restrict the receptive field, preventing the capture of continuous signal patterns (e.g., modulation cycles).
\end{itemize}
Thus, $M=8$ balances structural stability with sufficient semantic context.

\textbf{Analysis of Power Iteration.} 
Table~\ref{tab:sensitivity_arch}(d) shows that increasing $I_{iter}$ beyond 1 degrades performance ($-2.13\%$ at $I_{iter}=2$). We attribute this to an \textit{Over-Adversarial Effect}: multi-step iterations generate overly aggressive perturbations that cross decision boundaries, exacerbating \textit{semantic drift}. In contrast, the single-step approximation ($I_{iter}=1$) provides a coarse yet effective direction, enhancing robustness while preserving semantic fidelity. Thus, $I_{iter}=1$ is optimal for both efficiency and stability.

\section{Detailed Complexity and Deployment Analysis}
\label{app:complexity}
To comprehensively evaluate the feasibility of DyCo-CL for edge deployment, we extend our analysis beyond theoretical complexity (FLOPs/Params) to practical hardware indicators, including inference latency, storage footprint, and throughput. All efficiency experiments were conducted on a workstation equipped with an \textit{AMD EPYC 9554 64-Core Processor} and a single \textit{NVIDIA RTX 4090}, using PyTorch with FP32 precision, as detailed in Table~\ref{tab:para}.

\textbf{Storage Efficiency.}
DyCo-CL is extremely lightweight. With only 1.44M parameters (occupying $\approx$5.8 MB), it is 16$\times$ smaller than the standard ResNet50-MoCo (94 MB), making it ideal for memory-constrained edge devices.

\textbf{Computational Trade-offs.}
We observe a distinction between ultra-lightweight CNNs and robust architectures:

\textbf{Vs. Ultra-Lightweight (CMSSAN):} While CMSSAN offers extreme speed via shallow depth, it lacks the capacity for complex signal modeling. DyCo-CL prioritizes representational stability over raw speed.

\textbf{Vs. Robust Baselines (SSCL-AMC):} This is where DyCo-CL excels. Compared to its direct competitor SSCL-AMC, DyCo-CL reduces FLOPs by 2.5$\times$ ($36.9 \to 14.5$ M) and triples the inference throughput (1672 vs. 498 samples/s).

\textbf{Real-Time Feasibility.}
DyCo-CL achieves a latency of 0.60 ms, falling comfortably within the sub-millisecond scheduling requirements of 5G NR. This confirms that DyCo-CL occupies an \textit{optimal sweet spot}: it delivers the robustness of heavy models (like ResNet) with the efficiency required for practical deployment.

\begin{table}[!h]
\centering
\caption{Comprehensive Comparison of Deployment Metrics. (Storage is estimated based on FP32 weights).}
\label{tab:para}
\setlength{\tabcolsep}{3pt}
\begin{tabular}{cccccc}
\toprule
Model & \makecell{Params\\(M)} & \makecell{FLOPs\\(M)} & \makecell{Storage\\(MB)} & \makecell{Latency\\(ms)} & \makecell{Throughput\\(samples/s)} \\
\midrule
APFS & 1.09 & 50.27 & 4.4 & 42.5 & 23.49 \\
CMSSAN & 0.123 & 2.33 & 0.5 & 0.007 & 140335 \\
EET-MoCo & 1.005 & 11.340 & 4.1 & 1.37 & 729 \\
ResNet50-MoCo & 23.520 & 101.900 & 94.1 & 0.06 & 17280 \\
SSCL-AMC & 1.515 & 36.934 & 6.1 & 2.01 & 498 \\
\textbf{DyCo-CL} & \textbf{1.443} & \textbf{14.46} & \textbf{5.8} & \textbf{0.60} & \textbf{1672} \\
\bottomrule
\end{tabular}
\end{table}

\section{Additional Experiments on RML2018.01A}
\label{app:comfusion}

\textbf{Confusion Matrix Analysis across SNRs.}
As shown in Fig.~\ref{fig:confusion_matrix}, our model achieves near-perfect separation for non-QAM signals (e.g., PSK, FSK) at 6dB, with remaining errors concentrated within the \textit{QAM family} due to topological inclusion. Despite this inherent ambiguity, DyCo-CL maintains a robust 7.54\% lead over the strongest baseline.

\begin{figure*}[t] % 如果是双栏论文，请务必使用 figure* 让图片跨栏；如果是单栏论文，去掉星号用 figure 即可
    \centering
    
    % --- 第一行 (Row 1) ---
    \begin{subfigure}[b]{0.32\linewidth}
        \centering
        \includegraphics[width=\linewidth]{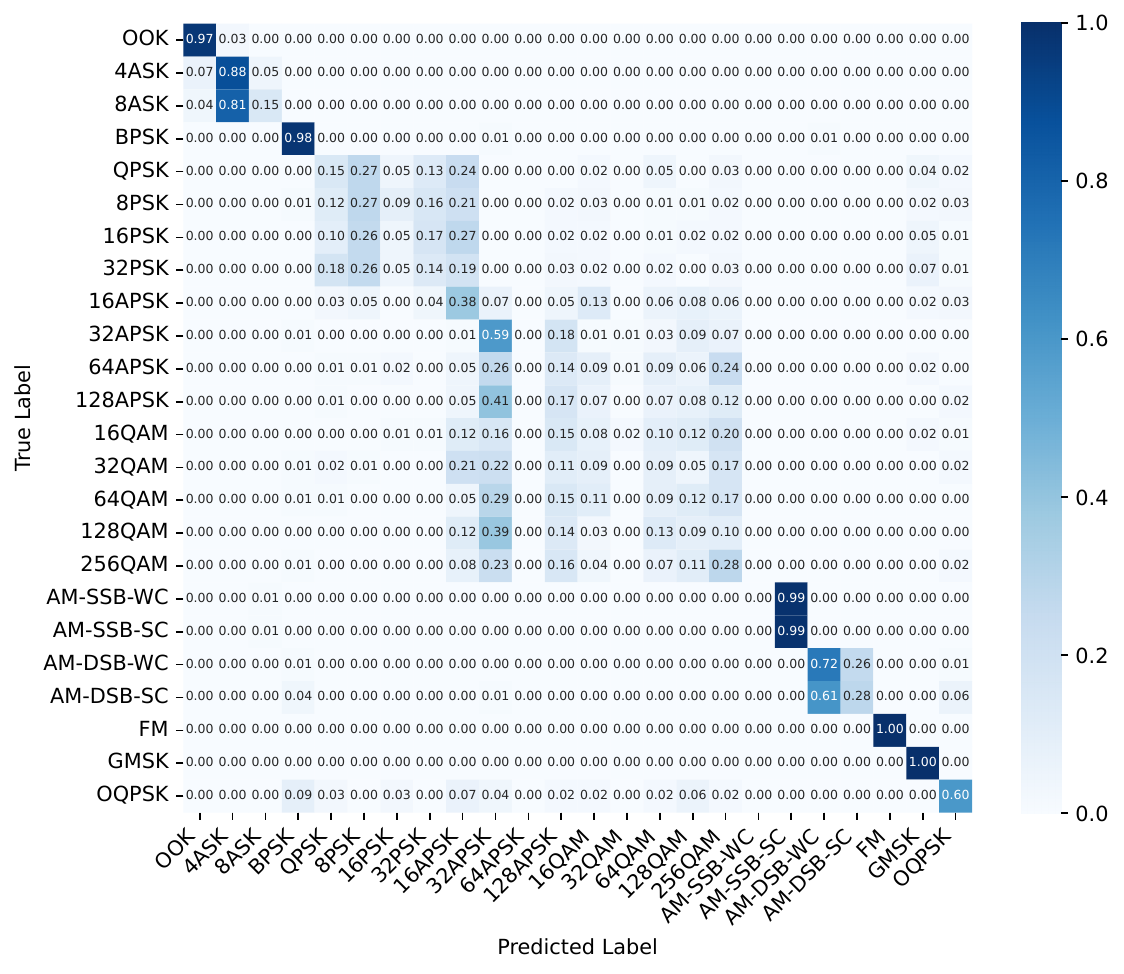} % 图1
        \caption{SNR = 0dB}
    \end{subfigure}
    \hfill % 弹性间距
    \begin{subfigure}[b]{0.32\linewidth}
        \centering
        \includegraphics[width=\linewidth]{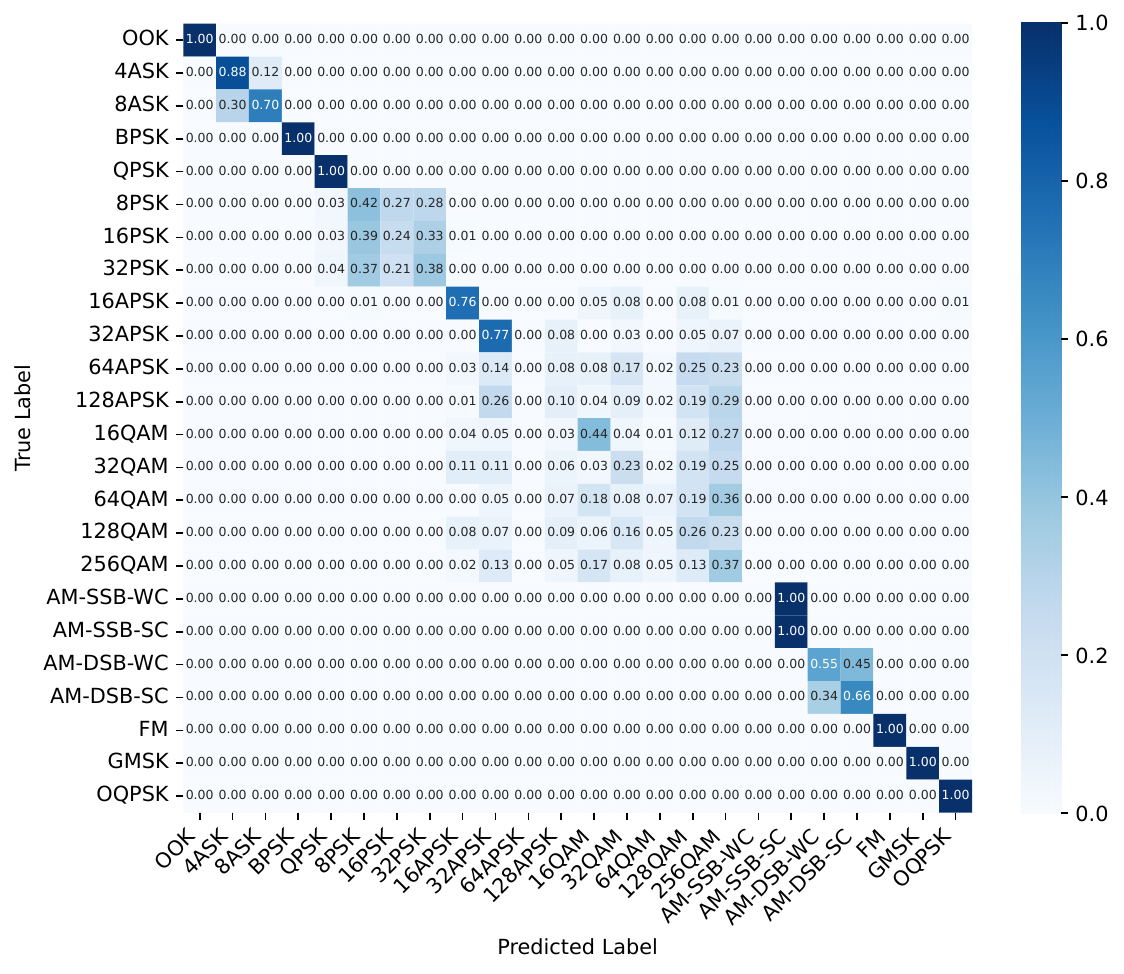} % 图2
        \caption{SNR = 6dB}
    \end{subfigure}
    \hfill % 弹性间距
    \begin{subfigure}[b]{0.32\linewidth}
        \centering
        \includegraphics[width=\linewidth]{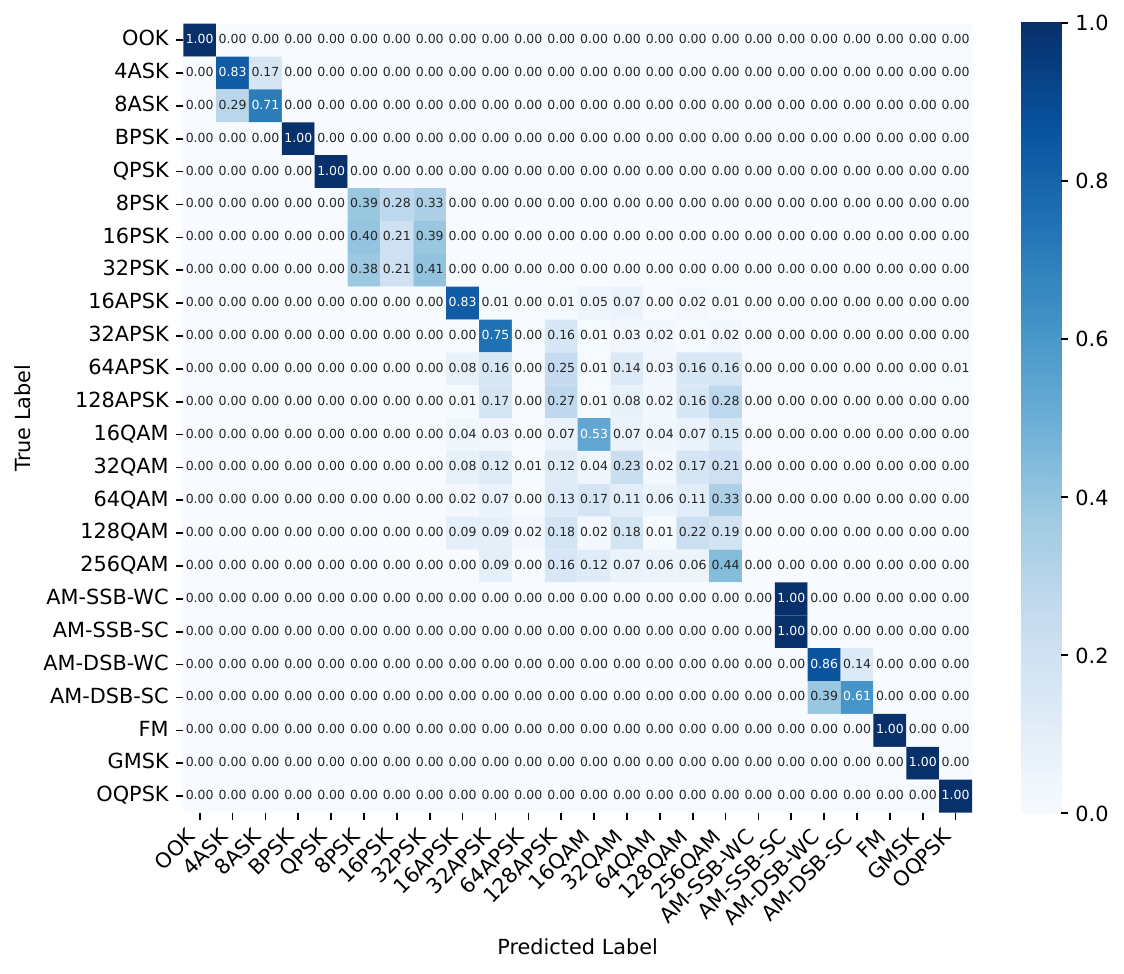} % 图3
        \caption{SNR = 12dB}
    \end{subfigure}
    
    \vspace{1em} % 行与行之间的垂直间距
    
    % --- 第二行 (Row 2) ---
    \begin{subfigure}[b]{0.32\linewidth}
        \centering
        \includegraphics[width=\linewidth]{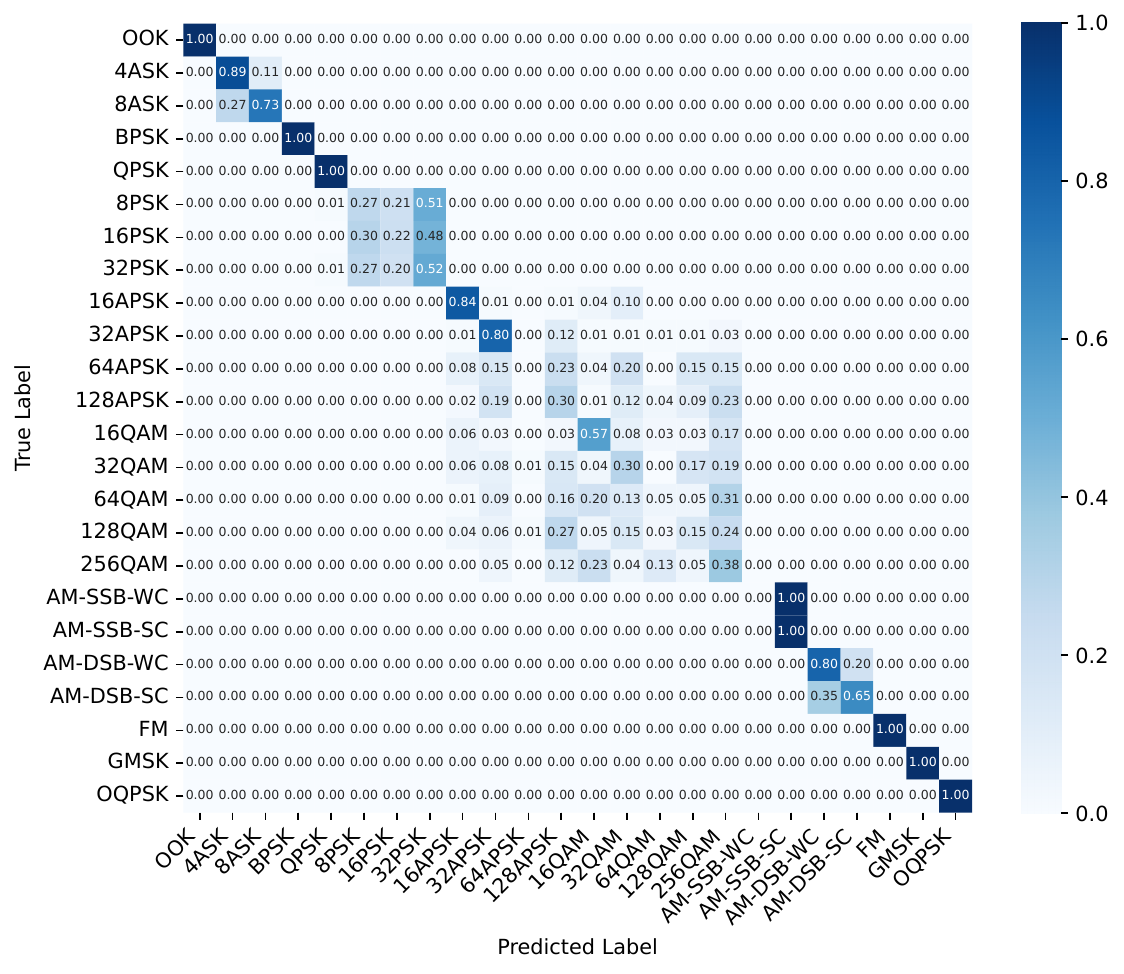} % 图4
        \caption{SNR = 18dB}
    \end{subfigure}
    \hfill
    \begin{subfigure}[b]{0.32\linewidth}
        \centering
        \includegraphics[width=\linewidth]{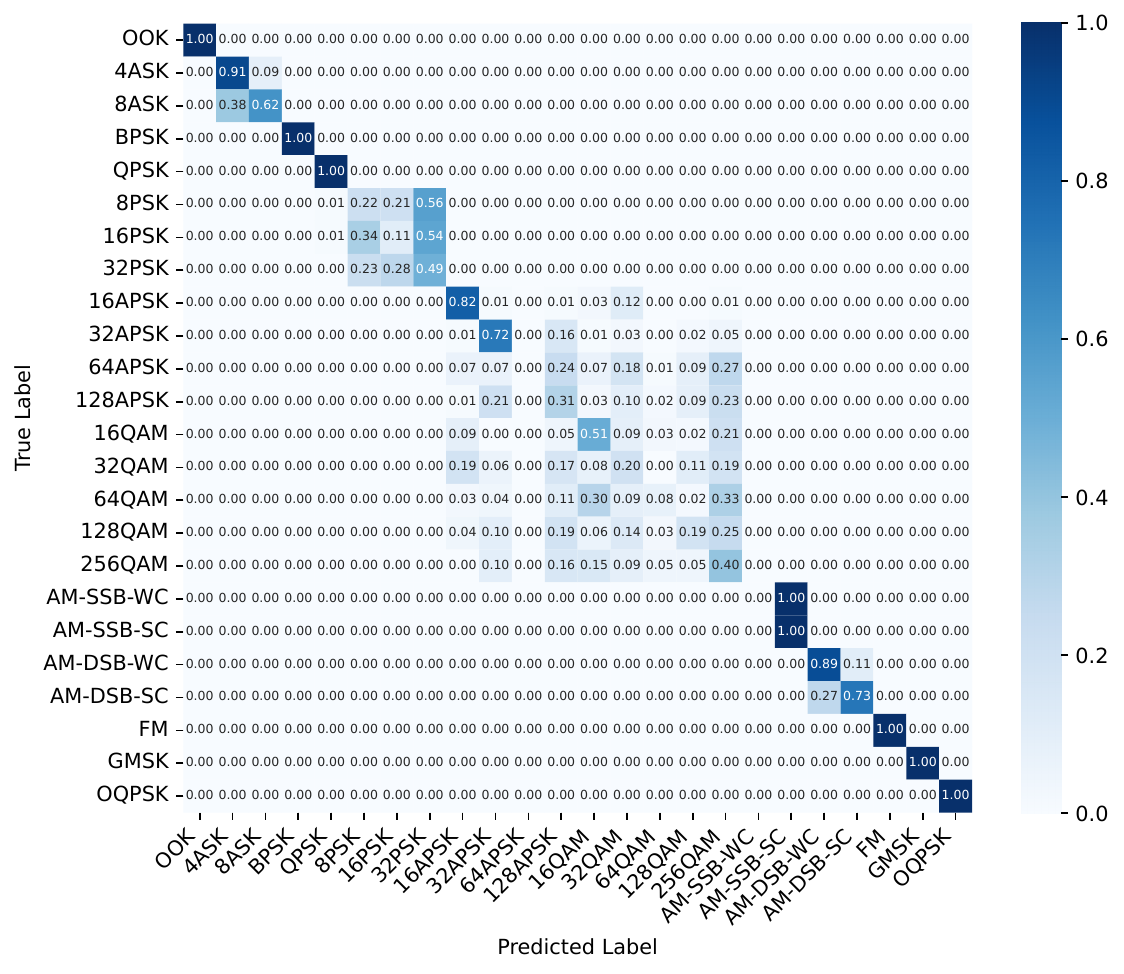} % 图5
        \caption{SNR = 24dB}
    \end{subfigure}
    \hfill
    \begin{subfigure}[b]{0.32\linewidth}
        \centering
        \includegraphics[width=\linewidth]{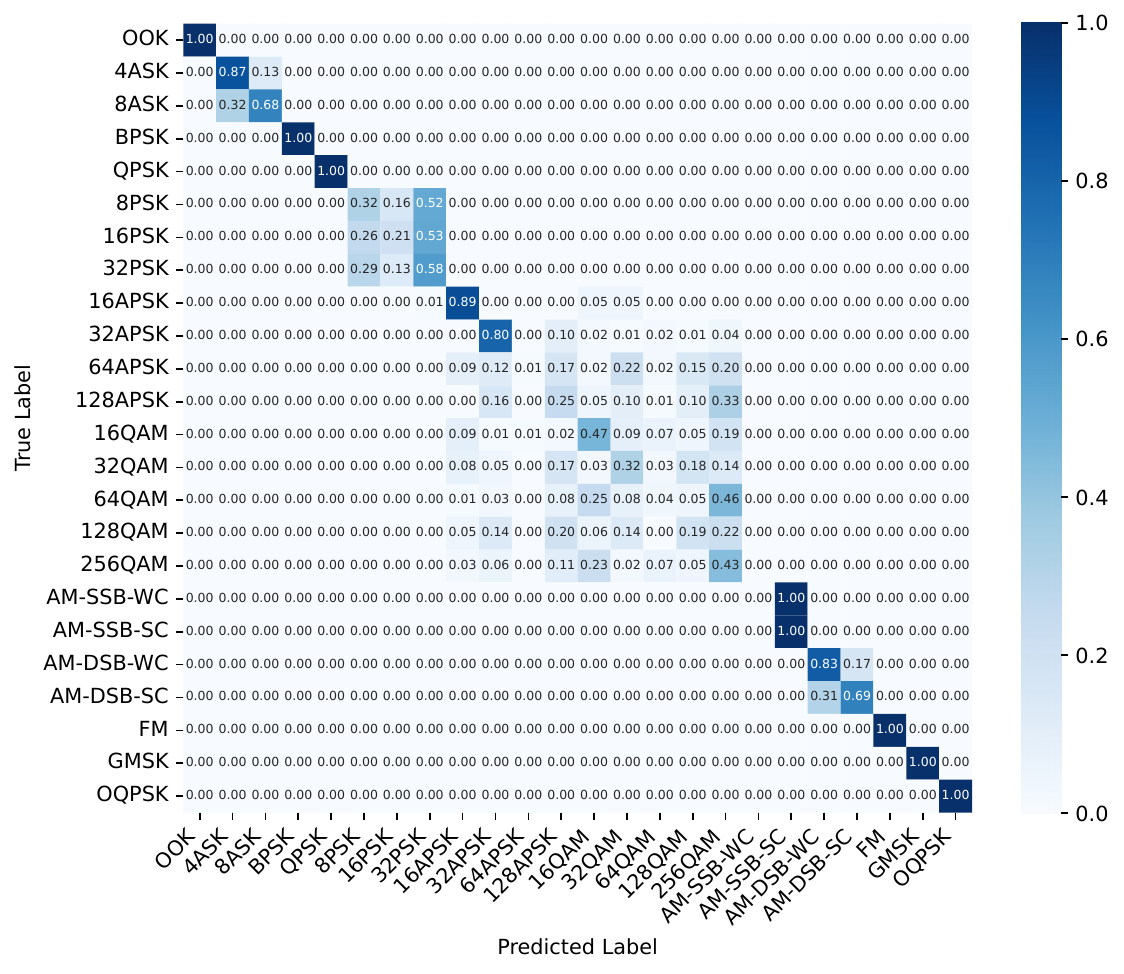} % 图6
        \caption{SNR = 30dB}
    \end{subfigure}
    
    \caption{\textit{Confusion Matrices across different SNRs ($N=10$).} The 2$\times$3 grid demonstrates the evolution of classification performance. (a)-(c) Low to Medium SNR; (d)-(f) High SNR.}
    \label{fig:confusion_matrix}
\end{figure*}

%%%%%%%%%%%%%%%%%%%%%%%%%%%%%%%%%%%%%%%%%%%%%%%%%%%%%%%%%%%%%%%%%%%%%%%%%%%%%%%
%%%%%%%%%%%%%%%%%%%%%%%%%%%%%%%%%%%%%%%%%%%%%%%%%%%%%%%%%%%%%%%%%%%%%%%%%%%%%%%

\end{document}